\documentclass[11pt]{article}

\usepackage{wrapfig}
\usepackage{enumitem}
\usepackage{setspace}

\usepackage[utf8]{inputenc}
\usepackage[T1]{fontenc}
\usepackage{microtype}
\usepackage{graphicx}
\usepackage{booktabs}
\usepackage{array}
\usepackage{multirow}
\usepackage{subcaption}
\usepackage{float}
\usepackage{ragged2e}
\usepackage{nicefrac}
\usepackage{xcolor}
\usepackage{adjustbox}
\usepackage{amsmath,amssymb,amsfonts,amsthm,bm}
\usepackage{algorithm}
\usepackage{algpseudocode}
\usepackage[numbers,sort&compress]{natbib}
\usepackage[hyphens]{url}
\usepackage[toc,page]{appendix}
\usepackage[colorlinks=true,linkcolor=blue,citecolor=blue,urlcolor=blue,
            breaklinks=true]{hyperref}

\newcommand{\E}{\mathbb{E}}
\newcommand{\R}{\mathbb{R}}
\newcommand{\V}{\mathcal{V}}
\newcommand{\simplex}{\Delta}
\newcommand{\pc}{p_{\mathrm{c}}}
\newcommand{\DScore}{\mathrm{DScore}}

\newcommand{\softmax}{\mathrm{softmax}}
\newcommand{\argmax}{\operatorname*{arg\,max}}
\newcommand{\argmin}{\operatorname*{arg\,min}}

\newtheorem{proposition}{Proposition}
\newtheorem{lemma}{Lemma}

\newtheorem{remark}{Remark}

\newcommand{\mpSecSetup}{\ref{sec:setup}}
\newcommand{\mpSecMethod}{\ref{sec:method}}
\newcommand{\mpSecToy}{\ref{sec:exp_toy}}
\newcommand{\mpSecByte}{\ref{sec:exp_byte}}
\newcommand{\mpSecScale}{\ref{sec:exp_scale}}
\newcommand{\mpSecRouters}{\ref{sec:exp_routers}}
\newcommand{\mpSecNatural}{\ref{sec:natural}}
\newcommand{\mpEqPool}{\eqref{eq:pool}}
\newcommand{\mpEqSeqPool}{\eqref{eq:seq_pool}}
\newcommand{\mpEqEvidence}{\eqref{eq:evidence}}
\newcommand{\mpEqSeqGradient}{\eqref{eq:seq_gradient}}

\newcommand{\mpEqDecode}{\eqref{eq:decode}}
\newcommand{\mpPropSeqGradient}{\ref{prop:seq_gradient}}
\newcommand{\mpTabByte}{\ref{tab:byte3}}
\newcommand{\mpFigFlagship}{\ref{fig:flagship}}

\title{Evidence-Aligned Local Composition of Discrete Experts\\
       for Sequence Restoration}

\author{Mohammad Panahazari$^{1}$, Usman A. Khan$^{2}$, Shuchin Aeron$^{1}$
\thanks{
$^1$ Department of Electrical and Computer Engineering, Tufts University, Medford, Massachusetts\\
$^2$ Boston College, Chestnut Hill, Massachusetts\\
E-mail: mohammad.panahazari@tufts.edu,
usman.khan@bc.edu, shuchin@ece.tufts.edu}}

\begin{document}
\maketitle

\begin{abstract}
A document modeled as a discrete sequence of tokens can be thought of as being generated from a composition of texts from different domains; a README file, for example, moves between prose, code, and configuration. When such a document is corrupted and only frozen domain experts are available, restoring it requires deciding both what is missing and which expert to trust at each position, at test time and without region labels or a trained router. We introduce evidence-aligned local composition, which infers a soft, position-wise weighting over the experts from the marginal evidence of the corrupted observation under a given corruption model, estimating the evidence from the experts' own denoising losses and smoothing the weights across positions. Because the weighting is soft, it recovers a mixture when the true composition is mixed and concentrates on one expert when that suffices. Across a categorical simulator, byte-level experts, and experts fine-tuned from a $1.3$B discrete flow-matching model, the inferred weights track the true regions at $0.85$ field accuracy on naturally mixed scientific documents, and at $0.98$ on constructed mixtures whose regions are lexically disjoint. Restoration improves over a single global weight when the experts are genuinely distinct and reduces to it when they converge, tracking a measure of expert separation. Code is available at \url{https://github.com/panahazari/evidence-aligned-local-composition}.
\end{abstract}

\section{Introduction}

Real documents mix several kinds of content. A software README interleaves prose, code, and configuration; a scientific article interleaves prose and mathematics; a data-processing script embeds queries in a host language. When such a document is damaged, by imperfect optical character recognition, a lossy channel, or aggressive filtering during data curation, restoring it means recovering the missing symbols. We call this channel a corruption channel, or forward model, and we assume access to it, expressed as the likelihood of the corrupted output conditioned on the true input.

Strong generative priors for each kind of content already exist as frozen, separately trained models, yet no single one serves a whole document well: the best expert changes from position to position, and the regions carry no labels. We study how to restore such a document at test time by composing frozen domain experts, inferring at each position both what symbol is missing and which combination or composition of experts to trust, without position labels or a trained model, i.e. a router for composition. We do not assume a single expert per position. The object we infer is a composition of experts at each position; routing to one expert is the special case in which the composition concentrates on a single vertex of the simplex.

This poses two coupled problems: the missing content must be inferred, but also the composition of the experts that should guide each position. Weighted products of experts, or equivalently weighted sums of their log probabilities, provide a natural way to compose models \citep{Hinton2002,Liu2022,Du2023}. However, most existing approaches assign one weight to each expert for the entire sequence \citep{Mavromatis2024}, which works when expert competence is roughly uniform but not when the preferred composition changes across positions. Local positional weights offer the needed flexibility but are usually supplied by supervision or a trained router, while the test-time mixtures that need no training fix one weight for the whole sequence, chosen from the input's own likelihood \citep{Mavromatis2024,wPoE2025}; Table~\ref{tab:related} places these lines along the three axes that define our setting.  Our question is different: Can a soft field of expert weights be inferred at test time from the corrupted observation and the given corruption channel, without region labels or router training? 

We approach this question through marginal evidence: the position-dependent weights parameterize a composed prior, and the likelihood of the observation determines how they should change \citep{MacKay1992}, connecting to evidence optimization and model selection for continuous-diffusion priors \citep{WangBouman2026,DiME2026} but making the weights local and discrete. The resulting update compares each expert's fit to the observation with its fit under the current composition, a posterior-minus-prior difference we make precise in Section~\ref{sec:method}. Because the exact update is not directly computable, we approximate it, which motivates the name \emph{evidence-aligned local composition}.

\begin{table}[t]
\centering
\small
\caption{Methods along our three axes and their data domain. \emph{Local}: the weights vary by position within one input, rather than one weighting being applied to the whole input. \emph{Test-time}: the weights are set at inference from the input itself, with no router or weight predictor trained beforehand. \emph{Corruption-aware}: inferred from the corrupted observation under a given channel. wPoE optimizes a single scalar at test time from one data point, so it is test-time but not local.}
\label{tab:related}
\begin{adjustbox}{max width=\linewidth}
\begin{tabular}{llcccc}
\toprule
Method & Experts & Domain & Local & Test-time & Corruption-aware \\
\midrule
Composable diffusion \citep{Liu2022,Du2023} & continuous diffusion & image & $\times$ & \checkmark & $\times$ \\
DEMix \citep{DEMix2022} & autoregressive LMs & text & $\times$ & \checkmark & $\times$ \\
PackLLM \citep{Mavromatis2024} & autoregressive LMs & text & $\times$ & \checkmark & $\times$ \\
wPoE \citep{wPoE2025} & autoregressive LMs & text & $\times$ & \checkmark & $\times$ \\
FactorDiff \citep{FactorDiff2026} & discrete diffusion & image & \checkmark & \checkmark & $\times$ \\
Evidence-optimized priors \citep{WangBouman2026} & continuous diffusion & image & $\times$ & \checkmark & \checkmark \\
\midrule
Ours & discrete flow matching & text & \checkmark & \checkmark & \checkmark \\
\bottomrule
\end{tabular}
\end{adjustbox}
\end{table}

\paragraph{Main Contributions.}
\begin{itemize}\itemsep2pt
\item \textbf{Local composition from corrupted-observation evidence.} We define a normalized sequence energy model with a simplex-valued expert weight at every position and infer it at test time by an exponentiated-gradient update on the marginal evidence, smoothed across positions. The evidence gradient compares posterior and prior expectations of the same local expert energy.
\item \textbf{Denoising losses as local energies.} We score candidate sequences with the experts' denoising losses. For mask-source experts, the schedule-weighted population loss upper-bounds sequence negative log likelihood; an unrestricted Bayes-optimal denoiser recovers the associated any-order conditional decomposition on supported contexts. The deployed unweighted score remains a surrogate.
\item \textbf{Field recovery without supervision.} Through extensive experiments, we show that, without region labels, the inferred field closely matches the true regions and improves restoration over an independently optimized global weight on constructed mixtures; on natural documents the gain tracks how distinct the experts are, improving restoration over the global family on scientific text mixing prose and mathematics and tying when the experts converge.
\end{itemize}

\section{Related Work}
\label{sec:related}

\paragraph{Composing and fusing experts.}
Composing models by multiplying densities, or summing log scores, underlies products of experts \citep{Hinton2002} and compositional or superposed diffusion \citep{Liu2022,Du2023,Garipov2023,Skreta2025}; frozen language models are similarly fused or merged without training, as in PackLLM's perplexity-optimized global mixture \citep{Mavromatis2024}, DEMix's parameter-free domain posterior \citep{DEMix2022}, wPoE's single test-time scalar for compression \citep{wPoE2025}, and output- or parameter-level ensembling and merging \citep{Wan2024,Jiang2023,Sukhbaatar2024}. All of these assign one weighting to the whole input. Position-dependent weights are otherwise supplied by supervision or, in concurrent work, produced by factorized composition of discrete diffusion experts from confidence margins \citep{FactorDiff2026}. In contrast, our weight varies with position and is inferred at test time from the corrupted observation and a given channel, with no router training or region labels.

\paragraph{Discrete diffusion priors and inverse problems.}
Our experts are discrete diffusion and flow-matching models \citep{Austin2021,Campbell2022,Lou2024,Sahoo2024,Shi2024,Gat2024,Lipman2023} that now scale to billions of parameters \citep{Nie2025Scaling,Gong2025,Nie2025LLaDA,Dream2025}; for mask-source absorbing diffusion, the schedule-weighted denoising objective upper-bounds sequence negative log likelihood and a Bayes-optimal denoiser recovers clean-data conditionals and their expected any-order autoregressive decomposition \citep{Ou2024,Uria2014,Hoogeboom2022}. Restoring an observation under such a prior is the subject of diffusion inverse problems, in the continuous setting \citep{Chung2023,Kawar2022,WangBouman2026} and, more recently, with guidance tailored to discrete state spaces \citep{Nisonoff2025,Schiff2025}, while marginal-evidence estimation also drives model selection \citep{DiME2026,MacKay1992}. Our corruption channel plays the role of the forward operator, but the quantity we infer is a local composition field over frozen experts rather than a single guided reconstruction.

\section{Problem Formulation}
\label{sec:setup}

\paragraph{Problem setting.}
Let $\V$ be a finite vocabulary and $x=(x_1,\dots,x_L)\in\V^L$ a clean sequence, where $\ell \in [1:L]$ is the position index. The goal is to recover x from y. In order to do this, We are given $m$ generative experts $q_1,\dots,q_m$, each defining a distribution over $\V^L$. These experts are contextual: the per-position conditional $q_i(x_\ell\mid x_{\setminus\ell})$ over $\V$ depends on the other positions $x_{\setminus\ell}$, as in the discrete flow-matching models we use. A position-factorized expert, whose conditional ignores $x_{\setminus\ell}$, is the special case we use for the local pool in Eq.~\eqref{eq:pool}. The experts remain frozen during composition, and their training data are not available. We observe only $y$, produced from $x$ by a given corruption channel $\pc(y\mid x)$. In the \emph{mask channel}, each token is independently replaced with probability $r$ by a symbol $\mathtt{[MASK]}\notin\V$; in the \emph{replacement channel}, it is replaced by a uniform draw from $\V$. Both channels factorize across positions, so $\pc(y\mid x)=\prod_\ell \pc(y_\ell\mid x_\ell)$. We denote the local likelihood of candidate token $v$ by $c_\ell(v)=\pc(y_\ell\mid x_\ell=v)$.

\paragraph{Local composition.}
We attach a simplex-valued weight to every position. Specifically, the \emph{composition field} is $\lambda_{1:L}=(\lambda_1,\dots,\lambda_L)$ with $\lambda_\ell\in\simplex_m=\{\lambda\in\R_+^m:\sum_i\lambda_i=1\}$. If expert $i$ assigns the categorical distribution $\pi_{i,\ell}$ to the token at position $\ell$, their local log-linear pool is
\begin{equation}\pi_{\lambda_\ell,\ell}(v)=\frac{\exp\big(\textstyle\sum_i \lambda_{\ell,i}\log\pi_{i,\ell}(v)\big)}{\sum_{u\in\V}\exp\big(\sum_i \lambda_{\ell,i}\log\pi_{i,\ell}(u)\big)}. \label{eq:pool}\end{equation}
A vertex $\lambda_\ell=e_i$ (the $i$-th standard basis vector of $\R^m$) routes the position to the single $i$-th expert, whereas an interior point composes several experts. We say a sequence has a \emph{region structure} when its positions split into contiguous blocks, each generated by a single expert; the field is then near a vertex inside a block and mixed near a boundary. Region labels are never given to the method. Where they exist, we use them only to evaluate the inferred field (Section~\ref{sec:experiments}). The local pool of Eq.~\eqref{eq:pool} composes the experts one position at a time and is well defined only when the experts are position-factorized. Our experts are contextual, so a token's conditional depends on the rest of the sequence, and the product of their local conditionals need not be a valid joint distribution. We therefore, compose the experts at the level of the whole sequence, and recover the local pool as the position-factorized special case.

\paragraph{Sequence-level composition.}
The local pool in \eqref{eq:pool} is sufficient for position-factorized experts. The contextual experts here are the same $q_1,\dots,q_m$ defined above; there is no separate set of models. For these experts, however, multiplying their position-wise conditionals need not produce a normalized joint distribution. We therefore define the sequence model directly. Let $\phi_{i,\ell}(x)\in\R$ denote the energy that expert $i$ assigns to position $\ell$ within sequence $x$; Section~\ref{sec:method} later constructs its practical surrogate. The resulting normalized log-linear prior is
\begin{equation}
\begin{aligned}
Q_\lambda(x)&=\tfrac{1}{Z(\lambda)}\exp\Big(\textstyle\sum_{\ell=1}^{L}\sum_{i=1}^{m}\lambda_{\ell,i}\,\phi_{i,\ell}(x)\Big),\\
Z(\lambda)&=\textstyle\sum_{x'\in\V^L}\exp\Big(\sum_{\ell,i}\lambda_{\ell,i}\,\phi_{i,\ell}(x')\Big).
\end{aligned}
\label{eq:seq_pool}
\end{equation}
Its marginal likelihood for the observation is $p_\lambda(y)=\sum_x\pc(y\mid x)Q_\lambda(x)$. When all experts are position-factorized and $\phi_{i,\ell}(x)=\log\pi_{i,\ell}(x_\ell)$, $Q_\lambda$ factorizes and reduces to the local construction in \eqref{eq:pool}. For contextual experts, the evidence under \eqref{eq:seq_pool} remains our target, although the practical sampler is not known to draw from $Q_\lambda$. Section~\ref{sec:method} makes this approximation explicit.

\paragraph{Objective.}
Given the composed prior \eqref{eq:seq_pool}, the field is inferred from the corrupted observation alone. Define the log marginal likelihood of the observation, with the unknown clean sequence integrated out, and the total variation of the field along the sequence:
\begin{equation}
\begin{gathered}
L(\lambda;y)=\log \textstyle\sum_{x\in\V^L} \pc(y\mid x)\,Q_\lambda(x),\quad
\Omega(\lambda)=\textstyle\sum_{\ell=2}^{L}\|\lambda_\ell-\lambda_{\ell-1}\|_1,\\
\hat\lambda_{1:L}\in\argmax_{\lambda_{1:L}\in\simplex_m^L} F(\lambda;y),\quad
F(\lambda;y)=L(\lambda;y)-\tau\,\Omega(\lambda),\ \ \tau\ge0.
\end{gathered}
\label{eq:seq_evidence}
\end{equation}
The regularizer admits a prior reading. A Laplace density on the increments $\lambda_\ell-\lambda_{\ell-1}$ with scale $\tau^{-1}$ has $\log p(\lambda)=-\tau\Omega(\lambda)$ up to an additive constant, so $F$ is the log posterior $\log p(y\mid\lambda)+\log p(\lambda)$. Penalizing the increments in $\ell_1$ makes the maximizer piecewise constant, matching regions that persist and then change at a boundary. At $\tau=0$ the objective reduces to the evidence.

When $N$ corrupted observations share one field, as in the simulator of Section~\ref{sec:exp_toy}, we average $L(\lambda;y^{(n)})$ over $n$; every real-data experiment has one observation per document.

For position-factorized experts and a memoryless channel (one that acts on each position independently, $\pc(y\mid x)=\prod_\ell\pc(y_\ell\mid x_\ell)$, as with the mask and replacement channels above), the evidence term separates across positions; this is proved as Proposition~\ref{prop:gradient} in Appendix~\ref{app:factorized}. Only $\Omega$ couples adjacent positions, so at $\tau=0$ the objective decouples and each $\lambda_\ell$ is optimized independently. This is the case used in the controlled simulator. Writing $c_{n,\ell}(v)=\pc(y^{(n)}_\ell\mid x_\ell=v)$, the contribution of position $\ell$ to the $N$-observation average becomes
\begin{equation}
\begin{gathered}
L_\ell(\lambda_\ell)=\tfrac1N\textstyle\sum_{n=1}^N\log \sum_{v\in\V} c_{n,\ell}(v)\,\pi_{\lambda_\ell,\ell}(v),\\
\hat\lambda_\ell\in\argmax_{\lambda_\ell\in\simplex_m} L_\ell(\lambda_\ell).
\end{gathered}
\label{eq:evidence}
\end{equation}
We use this tractable case for controlled validation. For contextual experts, the sequence-level objective \eqref{eq:seq_evidence} remains the target.

\paragraph{Expert requirements.}
Each expert $q_i$ must expose two operations for our method; we do not introduce new models here. A discrete flow-matching model, equivalently a mask-source discrete diffusion model, generates a sequence from a fully masked state by iteratively revealing tokens along a time schedule: at each step a denoiser predicts the clean token at every masked position from the revealed context, trained by a schedule-weighted denoising cross-entropy whose population value upper-bounds sequence negative log likelihood \citep{Austin2021,Campbell2022,Lou2024,Gat2024,Sahoo2024,Shi2024}. Unlike autoregressive models the denoiser is non-causal, conditioning on both sides of a masked position, which makes these models natural priors for restoration; we use them as our experts. Given a partially masked context $z$ and time $t$, each expert returns logits $h_i(z,t)_{\ell,v}$ for the clean token at each position, with softmax $p_{\theta_i}(x_\ell=v\mid z,t)$. It must also evaluate its denoising loss on a candidate clean sequence. Beyond these operations, the method is agnostic to architecture and model size.

\section{Method}
\label{sec:method}

Proofs of the statements in this section are collected in Appendix~\ref{app:proofs}, apart from the short argument for Lemma~\ref{lem:minorant}, which we keep in place because it is what makes the update a minorize-maximize step.

\subsection{The evidence gradient}

The update direction is the gradient with respect to $\lambda$ of the sequence-level evidence \eqref{eq:seq_evidence}. The simplex constraint $\lambda_\ell\in\simplex_m$ is not imposed on the gradient itself; it is maintained by the multiplicative exponentiated-gradient update in Eq.~\eqref{eq:mw}, which renormalizes each $\lambda_\ell$ back onto the simplex after every step. The deployed sampler is not known to draw from $Q_\lambda$, so the result below describes the exact target rather than the practical estimator.

\begin{proposition}[Sequence-level evidence gradient]
\label{prop:seq_gradient}
Let $Q_\lambda$ be the normalized composed prior \eqref{eq:seq_pool} with finite energies $\phi_{i,\ell}$, let $\pc(y\mid x)$ be a channel with $\sum_x \pc(y\mid x)Q_\lambda(x)>0$, and let $Q_\lambda(x\mid y)\propto \pc(y\mid x)Q_\lambda(x)$ be the induced posterior. Then $L(\lambda;y)=\log p_\lambda(y)$ is differentiable in $\lambda$ on the relative interior of $\simplex_m^L$ and, for every position $\ell$ and expert $i$,
\begin{equation}
\begin{aligned}
\frac{\partial L(\lambda;y)}{\partial\lambda_{\ell,i}}={}&\E_{x\sim Q_\lambda(\cdot\mid y)}\big[\phi_{i,\ell}(x)\big]\\
&-\E_{x\sim Q_\lambda}\big[\phi_{i,\ell}(x)\big].
\end{aligned}
\label{eq:seq_gradient}
\end{equation}
\end{proposition}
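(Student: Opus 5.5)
The plan is to write the log evidence as a difference of two log-partition functions over the finite space $\V^L$ and differentiate each one directly. Set $E_\lambda(x)=\sum_{\ell,i}\lambda_{\ell,i}\phi_{i,\ell}(x)$. By \eqref{eq:seq_pool},
\[
p_\lambda(y)=\frac{\sum_x \pc(y\mid x)\exp(E_\lambda(x))}{Z(\lambda)}.
\]
Hence $L(\lambda;y)=\log Z_y(\lambda)-\log Z(\lambda)$, where $Z_y(\lambda)=\sum_x \pc(y\mid x)\exp(E_\lambda(x))$.

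Both $Z$ and $Z_y$ are finite sums of functions of the form $a_x\exp(\text{linear in }\lambda)$, with $a_x\ge 0$ and finite energies. They are therefore real-analytic on all of $\R^{mL}$, not only on the relative interior of $\simplex_m^L$. The same holds for the logarithms wherever their arguments are positive.

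\begin{itemize}
\item $Z(\lambda)>0$ always, since every summand is a positive exponential.
\item $Z_y(\lambda)=Z(\lambda)\,p_\lambda(y)>0$ by the hypothesis $p_\lambda(y)>0$.
\end{itemize}

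For positivity to persist under perturbation, I will note that the support of $x\mapsto \pc(y\mid x)\exp(E_\lambda(x))$ does not depend on $\lambda$. So $Z_y$ is positive at one $\lambda$ if and only if it is positive everywhere. This settles differentiability, and restricting to the relative interior of the simplex then only means the partial derivatives are taken as ambient ones.

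Next I differentiate term by term, which is legitimate because the sum is finite. Since $\partial E_\lambda(x)/\partial\lambda_{\ell,i}=\phi_{i,\ell}(x)$,
\[
\frac{\partial \log Z}{\partial\lambda_{\ell,i}}=\sum_x \frac{e^{E_\lambda(x)}}{Z(\lambda)}\,\phi_{i,\ell}(x)=\E_{Q_\lambda}[\phi_{i,\ell}].
\]
In the same way,
\[
\frac{\partial \log Z_y}{\partial\lambda_{\ell,i}}=\sum_x \frac{\pc(y\mid x)e^{E_\lambda(x)}}{Z_y(\lambda)}\,\phi_{i,\ell}(x).
\]
Dividing numerator and denominator by $Z(\lambda)$ shows that these weights are exactly $Q_\lambda(x\mid y)\propto\pc(y\mid x)Q_\lambda(x)$. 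Subtracting the two derivatives gives \eqref{eq:seq_gradient}.

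No step is a real obstacle, since everything is a finite-sum computation. The one point that needs care is the domain issue. The simplex constraint means the partials in the statement are ambient partials, not derivatives along the simplex; the text preceding Proposition~\ref{prop:seq_gradient} confirms this reading, because the constraint is handled separately by renormalization. The other point is the positivity of $Z_y$ in a neighborhood, which the $\lambda$-independence of its support settles.
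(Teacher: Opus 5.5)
Your proof is correct; it organizes the same finite-sum computation differently from the paper. You split $L(\lambda;y)=\log Z_y(\lambda)-\log Z(\lambda)$, which is exactly the difference-of-log-sum-exp form \eqref{eq:dc} the paper uses later for the minorant lemma. You then apply ``the gradient of a log-partition function is the mean of the sufficient statistic'' twice: once to the prior family, and once to the posterior. The posterior is itself an exponential family in $\lambda$, with base measure $\pc(y\mid\cdot)$, the same statistic $\phi_{i,\ell}$, and log-partition $\log Z_y$.

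The paper's appendix proof never introduces $Z_y$. It first computes the prior score, $\partial\log Q_\lambda(x)/\partial\lambda_{\ell,i}=\phi_{i,\ell}(x)-\E_{Q_\lambda}[\phi_{i,\ell}]$. It then uses the Fisher-type identity $\partial\log p_\lambda(y)=\E_{Q_\lambda(\cdot\mid y)}[\partial\log Q_\lambda(x)]$ via $\partial Q_\lambda=Q_\lambda\,\partial\log Q_\lambda$, and substitutes the score at the end.

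\begin{itemize}
\item Your route makes the posterior-minus-prior structure symmetric, since both terms are means of the same statistic under two exponential families. It also dovetails directly with the minorant argument.
\item The paper's route separates a step that holds for any smooth, positive prior family (the Fisher identity) from the exponential-family step. It also yields the prior score as a reusable intermediate, which Lemma~\ref{lem:minorant}(iii) and the factorized Lemma~\ref{lem:pool_score} mirror.
\end{itemize}

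Your observation that the support of $x\mapsto\pc(y\mid x)e^{E_\lambda(x)}$ does not depend on $\lambda$ is a small genuine improvement. It shows that positivity at one $\lambda$ gives positivity, and hence differentiability of the logarithm, in a full neighborhood. The paper's Step~2 simply invokes the hypothesis, which as stated concerns only the given $\lambda$.
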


Appendix~\ref{app:gradient} gives the proof. The gradient has a simple interpretation: it compares the same local expert energy under sequences consistent with the observation and under sequences drawn from the composed prior. A coordinate increases relative to the others when its expert has a larger posterior-minus-prior energy gap. We call this difference the \emph{evidence correction}; Section~\ref{sec:exp_routers} compares it with local heuristics that omit the prior term. Expert-specific additive offsets cancel in the subtraction, but relative scales do not, so the correction alone does not calibrate the experts. At a masked position the local likelihood is flat, so in a factorized model the evidence gradient vanishes there (Proposition~\ref{prop:mask_degeneracy}, proved in Appendix~\ref{app:factorized}); evidence then reaches the position only through contextual dependence in the sequence model or through smoothing from neighbors, the two mechanisms our single-observation real-data experiments rely on.

\subsection{Denoising losses as local expert energies}

The exact gradient requires local expert energies. Recall from Eq.~\eqref{eq:seq_pool} that the local expert energy $\phi_{i,\ell}(x)\in\R$ is the contribution of position $\ell$ to the log score expert $i$ assigns to the whole sequence $x$, so that expert $i$'s total log score is $\sum_\ell\phi_{i,\ell}(x)$ up to the normalizer; a higher $\phi_{i,\ell}(x)$ means expert $i$ finds the token at position $\ell$ more compatible with its context. Flow-matching models do not expose normalized sequence densities directly, but their denoising losses provide a tractable surrogate. Let $p_t(\cdot\mid x)$ denote the mask-source path that reveals each position of candidate sequence $x$ independently with probability $\kappa(t)$ at time $t$ and masks it otherwise, where $\kappa$ is an increasing schedule with $\kappa(0)=0$ and $\kappa(1)=1$. The schedule-weighted population denoising loss is a variational upper bound on sequence negative log likelihood \citep{Austin2021,Campbell2022,Sahoo2024,Shi2024}. For an unrestricted Bayes-optimal denoiser, it recovers clean-data conditionals on contexts with positive $q_i$-probability. Their order-averaged negative log scores are $\mathcal E^\star_{i,\ell}(x)=\E_\sigma[-\log q_i(x_\ell\mid x_{P_\sigma(\ell)})]$, where $\sigma$ is a uniformly random order and $P_\sigma(\ell)$ contains the positions preceding $\ell$ \citep{Uria2014,Hoogeboom2022,Ou2024}. By the chain rule these scores sum to $-\log q_i(x)$. For a factorized expert, $\mathcal E^\star_{i,\ell}$ is the token negative log density, while $\phi_{i,\ell}(x)=-\mathcal E^\star_{i,\ell}(x)$ is the token log density. In practice, we approximate this energy using $K$ sampled times:
\begin{equation}
\begin{gathered}
\widehat{\DScore}_{i,\ell}(x)=\tfrac1K\textstyle\sum_{k=1}^K -\log \softmax_{v=x_\ell} h_i(x_{t_k},t_k)_{\ell,v},\\
\hat s_{i,\ell}(x)=-\alpha_i\,\widehat{\DScore}_{i,\ell}(x),
\end{gathered}
\label{eq:dscore}
\end{equation}
where $t_k\sim\mathrm{U}[0,1]$ and $x_{t_k}\sim p_{t_k}(\cdot\mid x)$. Because $\widehat{\DScore}_{i,\ell}$ omits the schedule weight, we call it a denoising score rather than a Negative ELBO (NELBO). We use $\hat s_{i,\ell}$ as a surrogate for $\phi_{i,\ell}$; in the factorized case, it plays the role of the token log density $\log\pi_{i,\ell}$. The negative sign maps a lower denoising loss to greater compatibility. Additive expert-specific offsets cancel from the evidence correction, whereas the scales $\alpha_i$ do not. We set $\alpha_i=1$ throughout. The specialization check measures whether each expert favors its intended domain, but it does not calibrate these relative scales.

To estimate the two expectations, we compose the logits as $h_\lambda(z,t)_{\ell,v}=\sum_i\lambda_{\ell,i}h_i(z,t)_{\ell,v}$. Given $h_\lambda$ and observation $y$, the sampler $\mathcal S(h_\lambda,y)$ returns an unclamped particle set $\mathcal X_{\mathrm{pr}}=\{x_{\mathrm{pr}}^{(b)}\}_{b=1}^P$ and an observation-clamped set $\mathcal X_{\mathrm{po}}=\{x_{\mathrm{po}}^{(b)}\}_{b=1}^P$. We use their empirical averages for $\E_{\mathrm{prior}}$ and $\E_{\mathrm{post}}$, respectively. Averaging the surrogate over these two particle sets gives the implemented gradient estimate for $\partial L(\lambda;y)/\partial\lambda_{\ell,i}$,
\begin{equation}
\begin{aligned}
g_{\ell,i}={}&-\alpha_i\,\E_{\mathrm{post}}\big[\widehat{\DScore}_{i,\ell}(x)\big]\\
&+\alpha_i\,\E_{\mathrm{prior}}\big[\widehat{\DScore}_{i,\ell}(x)\big].
\end{aligned}
\label{eq:est_gradient}
\end{equation}
\paragraph{Relation to the exact objective.}
Equation~\eqref{eq:seq_gradient} is exact for the normalized model \eqref{eq:seq_pool}, whereas \eqref{eq:est_gradient} replaces its energies with finite-sample denoising scores, omits the schedule weight, scores revealed positions, and uses a composed-logit sampler not known to target $Q_\lambda$. Thus the deployed score is not asserted to be a likelihood bound. We therefore call the method evidence-aligned rather than evidence-maximizing and compare it with the exact objective in Section~\ref{sec:exp_toy}.

\subsection{Optimization, smoothing and decoding}

The two terms of \eqref{eq:seq_evidence} call for different treatment: $L$ is smooth in $\lambda$ but not concave, while $\Omega$ is convex but nonsmooth. We therefore take a gradient step on $L$ and a proximal step on $\Omega$. Throughout, $g^{(s)}$ denotes the estimate \eqref{eq:est_gradient} of $\nabla_\lambda L$ at the current iterate $\lambda^{(s)}$; the derivation is stated for the exact gradient, and $g^{(s)}$ substitutes for it in the deployed algorithm.

\paragraph{The evidence is a difference of convex functions.}
Write $c(x)=\pc(y\mid x)$ and recall $\Phi_\lambda(x)=\sum_{\ell,i}\lambda_{\ell,i}\phi_{i,\ell}(x)$, which is affine in $\lambda$. Splitting the evidence into its two normalizers,
\begin{equation}
L(\lambda;y)=\log\textstyle\sum_x c(x)e^{\Phi_\lambda(x)}-\log\textstyle\sum_x e^{\Phi_\lambda(x)},
\label{eq:dc}
\end{equation}
exhibits it as a difference of two log-sum-exp functions of affine arguments, both convex. Since $\V^L$ is finite and every summand is positive, $L$ is smooth; it is not concave, and \eqref{eq:seq_evidence} is not a concave program.

\paragraph{A concave surrogate, tight at the current iterate.}
\begin{lemma}[Minorant]
\label{lem:minorant}
Fix $\lambda^{(s)}$ in the relative interior of $\simplex_m^L$ and set {$M(\lambda\mid\lambda^{(s)})=\E_{x\sim Q_{\lambda^{(s)}}(\cdot\mid y)}\big[\Phi_\lambda(x)\big]-\log Z(\lambda)$}. Then \emph{(i)} $L(\lambda;y)\ge M(\lambda\mid\lambda^{(s)})+\kappa$ for a constant $\kappa$ independent of $\lambda$, with equality at $\lambda=\lambda^{(s)}$; \emph{(ii)} $M(\cdot\mid\lambda^{(s)})$ is concave; and \emph{(iii)} its gradient at $\lambda^{(s)}$ is the evidence gradient \eqref{eq:seq_gradient}.
\end{lemma}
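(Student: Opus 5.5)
The plan is the standard EM/variational argument applied to the first log-sum-exp in the difference-of-convex decomposition \eqref{eq:dc}; the second term $-\log Z(\lambda)$ is carried along unchanged. Write $r(x)=Q_{\lambda^{(s)}}(x\mid y)$, which is a probability distribution on $\V^L$ supported on $\{x: c(x)>0\}$. For any $\lambda$, multiply and divide by $r(x)$ inside the first normalizer and apply Jensen's inequality to the concave $\log$:
\[
\log\textstyle\sum_x c(x)e^{\Phi_\lambda(x)}\;\ge\;\E_{x\sim r}\big[\log c(x)+\Phi_\lambda(x)-\log r(x)\big].
\]
The sum runs over the support of $r$, so $\log c$ is finite there. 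Subtracting $\log Z(\lambda)$ gives $L(\lambda;y)\ge M(\lambda\mid\lambda^{(s)})+\kappa$ with
\[
\kappa=\E_r[\log c(x)-\log r(x)],
\]
which depends on $\lambda^{(s)}$ and $y$ but not on $\lambda$. This proves the inequality in (i).

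For the equality case, substitute $r(x)=c(x)e^{\Phi_{\lambda^{(s)}}(x)}/\sum_{x'}c(x')e^{\Phi_{\lambda^{(s)}}(x')}$. At $\lambda=\lambda^{(s)}$ the integrand $\log c+\Phi_{\lambda^{(s)}}-\log r$ equals the constant $\log\sum_{x'}c(x')e^{\Phi_{\lambda^{(s)}}(x')}$, so Jensen's inequality is tight. This completes (i).

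For (ii), note that $\Phi_\lambda(x)$ is affine in $\lambda$, so $\E_r[\Phi_\lambda]=\sum_{\ell,i}\lambda_{\ell,i}\E_r[\phi_{i,\ell}]$ is linear. The term $\log Z(\lambda)$ is a log-sum-exp of affine functions of $\lambda$, hence convex; its Hessian is the covariance of the energy vector under $Q_\lambda$, which is PSD. So $M$ is a linear function minus a convex function, and is therefore concave.

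For (iii), differentiate term by term. The linear part contributes $\E_{x\sim Q_{\lambda^{(s)}}(\cdot\mid y)}[\phi_{i,\ell}(x)]$. For the second part, $\partial_{\lambda_{\ell,i}}\log Z(\lambda)=\E_{Q_\lambda}[\phi_{i,\ell}]$, by the same softmax-derivative computation used for Proposition~\ref{prop:seq_gradient}. At $\lambda=\lambda^{(s)}$ the difference is exactly \eqref{eq:seq_gradient}. There is also a shorter route: $L-M-\kappa$ is smooth, nonnegative, and vanishes at the interior point $\lambda^{(s)}$, so it has a minimum there and its gradient vanishes. Here "interior" should be read relative to the affine hull of the simplex; the explicit derivative computation avoids that subtlety.

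The only delicate step is the support bookkeeping in (i). Positions or sequences with $c(x)=0$ must be excluded so that $\log c$ and $\log r$ stay finite. The hypothesis $p_\lambda(y)>0$, together with finite energies (which give $e^{\Phi}>0$ everywhere), guarantees that $r$ is well defined and that $\kappa$ is finite. Everything else is routine.
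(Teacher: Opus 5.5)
Your proof is correct and follows the paper's argument: Jensen on the first log-sum-exp of \eqref{eq:dc} with weights $Q_{\lambda^{(s)}}(\cdot\mid y)$ (so $\kappa$ is the posterior entropy plus $\E[\log c]$), concavity as an affine term minus the convex $\log Z$, and the gradient via the exponential-family identity. Your restriction to $\{x: c(x)>0\}$ is a worthwhile tightening. The paper calls the posterior weights positive, but under the mask channel they vanish on every sequence inconsistent with the observation.
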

\begin{proof}
For \emph{(i)}, apply Jensen to the first term of \eqref{eq:dc} using the weights $Q_{\lambda^{(s)}}(x\mid y)\propto c(x)e^{\Phi_{\lambda^{(s)}}(x)}$, which are positive and sum to one; $\kappa$ collects their entropy and $\E[\log c]$. Jensen is tight when the weights are proportional to the summand, that is at $\lambda=\lambda^{(s)}$. For \emph{(ii)}, $\E[\Phi_\lambda(x)]$ is affine in $\lambda$ and $\log Z$ is convex, so the difference is concave. For \emph{(iii)}, the first term is affine in $\lambda$ with its expectation taken under the \emph{fixed} law $Q_{\lambda^{(s)}}(\cdot\mid y)$, so it differentiates to $\E_{x\sim Q_{\lambda^{(s)}}(\cdot\mid y)}[\phi_{i,\ell}(x)]$, while $\partial\log Z(\lambda)/\partial\lambda_{\ell,i}=\E_{x\sim Q_\lambda}[\phi_{i,\ell}(x)]$ by the exponential-family identity used in Proposition~\ref{prop:seq_gradient}. Setting $\lambda=\lambda^{(s)}$ turns the second expectation into one under $Q_{\lambda^{(s)}}$ and recovers \eqref{eq:seq_gradient}.
\end{proof}

Proposition~\ref{prop:seq_gradient} is thus the gradient of a concave surrogate that touches $L$ at the current iterate. Since $-\tau\Omega$ is concave, $M-\tau\Omega$ minorizes $F$, and maximizing it exactly would give monotone ascent, $F(\lambda^{(s+1)})\ge M(\lambda^{(s+1)}\mid\lambda^{(s)})-\tau\Omega(\lambda^{(s+1)})+\kappa\ge F(\lambda^{(s)})$. That maximization is intractable, as $M$ contains $\log Z$. We therefore take one composite mirror ascent step on its linearization, with the entropic mirror map on each row:
\begin{equation}
\lambda^{(s+1)}=\argmax_{\lambda\in\simplex_m^L}\Big\{\langle g^{(s)},\lambda\rangle-\tau\Omega(\lambda)-\tfrac1\eta\textstyle\sum_\ell D_{\mathrm{KL}}\big(\lambda_\ell\,\|\,\lambda^{(s)}_\ell\big)\Big\}.
\label{eq:mirror}
\end{equation}
The entropic map is the natural geometry on a product of simplices: it keeps every iterate in the relative interior without an explicit projection, and its Bregman divergence is the KL divergence.

\paragraph{Splitting \eqref{eq:mirror} into two exactly solved steps.}
Equation \eqref{eq:mirror} has no closed form, because $\Omega$ couples adjacent rows. Dropping $\Omega$ leaves a problem that does: each row of the $\tau=0$ case of \eqref{eq:mirror} is solved exactly by
\begin{equation}
\tilde\lambda^{(s+1)}_{\ell,i}=\frac{\lambda^{(s)}_{\ell,i}\exp\!\big(\eta\, g^{(s)}_{\ell,i}\big)}{\sum_{j}\lambda^{(s)}_{\ell,j}\exp\!\big(\eta\, g^{(s)}_{\ell,j}\big)}.
\label{eq:mw}
\end{equation}
The normalization in \eqref{eq:mw} is the Bregman projection onto the simplex: dividing by the row sum is the exact solution of $\argmin_{\lambda_\ell\in\simplex_m}D_{\mathrm{KL}}\big(\lambda_\ell\,\|\,\lambda^{(s)}_\ell e^{\eta g^{(s)}_\ell}\big)$. Equation \eqref{eq:mw} is therefore a single mirror ascent step, not a gradient step followed by a separate renormalization.

The penalty is then applied as a proximal step from that point,
\begin{equation}
\lambda^{(s+1)}
=\argmin_{\lambda_{1:L}\in\simplex_m^L}\Big\{\tfrac12\sum_{\ell}\big\|\lambda_\ell-\tilde\lambda^{(s+1)}_\ell\big\|_2^2+\tau \Omega(\lambda)\Big\},
\label{eq:prox}
\end{equation}
the step $\mathrm{Smooth}_\tau$ of Algorithm~\ref{alg:main}, which returns a piecewise-constant field. The simplex constraint sits inside the argmin, so \eqref{eq:prox} is the proximal map of $\tau\Omega$ together with the constraint indicator, not of $\tau\Omega$ alone.

The order of the two steps is determined by scale. The quadratic term in \eqref{eq:prox} is not scale invariant, and the unnormalized vector $\lambda^{(s)}_\ell e^{\eta g^{(s)}_\ell}$ has total mass varying with $g^{(s)}$. Applying \eqref{eq:prox} before \eqref{eq:mw} would therefore let the effective strength of $\tau$ drift with the gradient magnitude across iterations. Normalizing first fixes the scale, and it makes $\Omega$ measure variation in composition rather than in magnitude.

For two experts, $\lambda_\ell=(a_\ell,1-a_\ell)$ reduces \eqref{eq:prox} to $\min_a\tfrac12\sum_\ell(a_\ell-\tilde a_\ell)^2+\tau\sum_{\ell\ge2}|a_\ell-a_{\ell-1}|$ up to an overall factor of two, the scalar fused lasso \citep{Tibshirani2005}, which Condat's algorithm solves exactly in linear time \citep{Condat2013}. Its solution cannot leave the range of its input, so it stays in $[0,1]$, the constraint is slack, and \eqref{eq:prox} is solved exactly; this covers every two-expert study in the paper. For three or more experts we apply the same map per column and renormalize, which does not maximize \eqref{eq:prox}, since the proximal map of a sum is not the composition of the proximal maps.

\paragraph{Scope of the derivation.}
Each step solves a stated problem exactly: \eqref{eq:mw} is the $\tau=0$ case of \eqref{eq:mirror}, and \eqref{eq:prox} is solved exactly for two experts. Their composition, however, is not \eqref{eq:mirror}: the forward step is taken in the KL geometry and the backward step in the Euclidean one, and a composition of proximal maps is not in general the proximal map of the sum. With the nonconcavity of \eqref{eq:dc}, this places the procedure as a splitting with a structural prior, for which we claim no convergence. The experiments on naturally mixed documents instead use a low-pass filter, convolving each column with a uniform kernel and blending with the unsmoothed field; it attenuates variation uniformly rather than preserving edges, and its $\tau$ is a blend weight, not comparable with \eqref{eq:prox}. The constructed studies use the proximal step at every scale, the naturally mixed document studies use the filter, and the simulator uses neither; Appendix~\ref{app:hparams} gives each experiment's form and $\tau$.

Once the field has been inferred, we restore each masked token by selecting the per-position mode of the composed logits,
\begin{equation}
\hat x_\ell=\argmax_{v\in\V}\textstyle\sum_i \hat\lambda_{\ell,i}\,h_i(z,t_{\mathrm{dec}})_{\ell,v},\quad \ell\in M,
\label{eq:decode}
\end{equation}
where $M$ is the masked set, $z$ contains the visible tokens, and all visible positions are copied unchanged. The decoder fills masked positions independently and therefore returns per-position modes rather than the joint posterior mode of $Q_\lambda$. At byte scale, decoding is deterministic given the field. The pretrained decoder also samples a uniform-source initialization, fixed once per document and shared across methods to pair decoding randomness; the iterative-decoder ablation preserves the ordering. Algorithm~\ref{alg:main} summarizes the procedure using the score $\hat s$ of \eqref{eq:dscore}.

\begin{algorithm}[t]
\caption{Evidence-aligned local composition}
\label{alg:main}
\begin{algorithmic}[1]
\Require masked $y$ with masked set $M$, experts $\{h_i\}_{i=1}^m$, sampler $\mathcal S$
\Require particles $P$, score samples $K$, iterations $R$, step $\eta$
\Require smoother $\mathrm{Smooth}_\tau$, decoder time $t_{\mathrm{dec}}$
\State $\lambda^{(0)}_{\ell,i}\gets 1/m$ for all $\ell,i$
\For{$s=0,\dots,R-1$}
  \State $(\mathcal X_{\mathrm{pr}},\mathcal X_{\mathrm{po}})\gets\mathcal S(h_{\lambda^{(s)}},y)$
  \State $g^{(s)}\gets$ gradient estimate \eqref{eq:est_gradient} from $\mathcal X_{\mathrm{pr}},\mathcal X_{\mathrm{po}}$
  \State $\tilde\lambda^{(s+1)}_{\ell,i}\propto\lambda^{(s)}_{\ell,i}\exp(\eta g^{(s)}_{\ell,i})$, renormalize over $i$
  \State $\lambda^{(s+1)}\gets \mathrm{Smooth}_\tau\big(\tilde\lambda^{(s+1)}\big)$
\EndFor
\State $\hat\lambda_{1:L}\gets\lambda^{(R)}_{1:L}$; initialize decoder state $z$ from $y$
\State $\hat x_\ell\gets$ decode \eqref{eq:decode} for $\ell\in M$;\ \ $\hat x_\ell\gets y_\ell$ otherwise
\State \Return $\hat\lambda_{1:L},\hat x$
\end{algorithmic}
\end{algorithm}

\section{Experiments}
\label{sec:experiments}

Our experiments address two questions: can the evidence-aligned update recover a known composition field, and does the recovered field improve restoration? We begin with an exact categorical simulator, where field recovery can be measured directly, then turn to learned byte-level experts and to experts adapted from a $1.3$B discrete flow-matching model, and close with naturally mixed documents, whose regions are not constructed for the experiment.

\paragraph{Protocol.}
Within an experiment all methods share the frozen experts, corrupted input, channel, decoder, and source initialization; only the composition weights vary. Comparisons are paired over the stated unit, aggregating by source document when a document contributes several windows, and we report $95\%$ bootstrap intervals, two-sided sign-flip permutation $p$-values, and Cohen's $d_z$. Unless noted, the channel is masking.

\paragraph{Metrics.}
We report restoration quality and field quality separately. \emph{Restoration accuracy} is the fraction of corrupted positions recovered exactly. For the field we use two measures, according to what is known. When the true field $\lambda^\star$ is available, as in the simulator of Section~\ref{sec:exp_toy}, we report the \emph{field mean absolute error} $\frac{1}{Lm}\sum_{\ell,i}|\hat\lambda_{\ell,i}-\lambda^\star_{\ell,i}|$. When only region labels are available, which is every learned-expert setting, we report \emph{field accuracy}, the fraction of positions at which $\argmax_i\hat\lambda_{\ell,i}$ equals the region label. Field accuracy is meaningful only for a position-varying field: a constant field has the same argmax everywhere, and we mark those entries $\cdot$. Region labels enter the metric only, never the method, the hard label router excepted, which is privileged for that reason.

\paragraph{Baselines.}
We compare local evidence with each individual expert, equal weights, an independently optimized global weight that enforces $\lambda_\ell=\lambda$ under the same optimization, reported at the byte, $1.3$B and arXiv scales, a best-single-expert selector given by the vertex with the lowest own-posterior denoising score, shuffled-field controls, and a hard router that receives the true region labels. The router is privileged but not an upper bound, since a soft composition can mix experts near region boundaries. Section~\ref{sec:exp_routers} adds three local routers that omit the evidence correction, and we adapt three published test-time fusion methods to corrupted windows: PackLLM in its softmax and greedy variants \citep{Mavromatis2024}, a DEMix-style per-position posterior router \citep{DEMix2022}, and a marginal-evidence selector \citep{DiME2026}. Each scores the visible tokens with the experts' own denoising losses and feeds the same decoder. Appendices~\ref{app:stats} and~\ref{app:baselines} detail every baseline and the full statistical protocol.

\subsection{Recovering a known composition field}
\label{sec:exp_toy}

We begin where the composition field is known exactly. The categorical simulator uses vocabulary size $12$, length $48$, and two experts, with three segments (two routed, one mixed); we draw a clean sequence from the composed prior and corrupt it with the replacement channel at rate $0.4$. Optimizing the exact evidence \eqref{eq:evidence} recovers the field at mean absolute error $0.013$, versus $0.333$ for equal weights and $0.500$ for a single expert (Figure~\ref{fig:field}); replacing the exact densities with two trained flow-matching experts and the denoising surrogate \eqref{eq:dscore} raises the error only to $0.031$, and the surrogate energy correlates $0.974$/$0.971$ with the exact log likelihood. Under the exact posterior decoder the recovered and ground-truth fields both reach accuracy $0.652$, above equal weights ($0.634$) and either expert ($0.382$; Table~\ref{tab:toy}). A broader sweep shows recovery improving with more observations and greater separation and degrading with corruption.

\begin{table}[t]
\centering
\caption{Field recovery and reconstruction in the categorical simulator (exact posterior decoder). Recon.: masked-token accuracy.}
\label{tab:toy}
\small
\begin{tabular}{lcc}
\toprule
Weights & MAE $\downarrow$ & Recon. $\uparrow$ \\
\midrule
Ground-truth $\lambda^\star$ & $0.000$ & $0.652$ \\
Exact evidence & $0.013$ & $0.652$ \\
Flow-matching surrogate & $0.031$ & $0.652$ \\
Equal weights & $0.333$ & $0.634$ \\
Single expert (either) & $0.500$ & $0.382$ \\
\bottomrule
\end{tabular}
\end{table}

\begin{figure}[t]
\centering
\includegraphics[width=\linewidth]{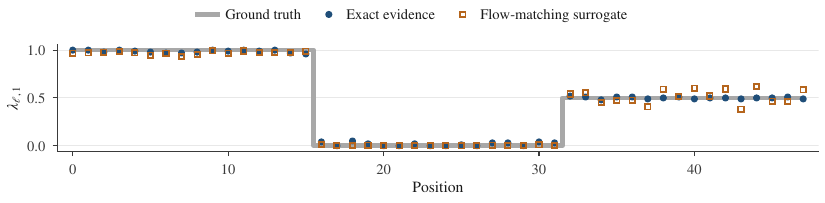}
\caption{Recovering a known field from corrupted observations alone: two routed segments (positions $0$ to $31$) and a fused segment ($32$ to $47$).}
\label{fig:field}
\end{figure}

\subsection{Byte-level experts}
\label{sec:exp_byte}

\begingroup
\setlength{\tabcolsep}{3pt}
\begin{table}[!t]
\centering
\caption{Three byte-level experts, $256$ windows from $160$ documents, mask channel, one-step decoder, equal-document estimand. Hard label router uses true labels (privileged, not an upper bound).}
\label{tab:byte3}
\begin{adjustbox}{max width=\linewidth}
\begin{tabular}{lccccc}
\toprule
Method & Accuracy $\uparrow$ & $\Delta$ vs ours & $95\%$ CI & $p$ & $d_z$ \\
\midrule
\textbf{Local evidence (ours)} & $\mathbf{0.660}$ & $\cdot$ & $\cdot$ & $\cdot$ & $\cdot$ \\
Hard label router & $0.658$ & $+0.001$ & $[-0.003,\,+0.006]$ & $0.50$ & $0.05$ \\
PackLLM-sim fusion \citep{Mavromatis2024} & $0.622$ & $+0.038$ & $[+0.032,\,+0.044]$ & $<10^{-3}$ & $0.99$ \\
PackLLM-opt fusion \citep{Mavromatis2024} & $0.619$ & $+0.041$ & $[+0.035,\,+0.046]$ & $<10^{-3}$ & $1.13$ \\
Independent global weight & $0.618$ & $+0.042$ & $[+0.037,\,+0.048]$ & $<10^{-3}$ & $1.18$ \\
Best single expert (marginal evidence) \citep{DiME2026} & $0.613$ & $+0.047$ & $[+0.041,\,+0.053]$ & $<10^{-3}$ & $1.20$ \\
Best single expert (own posterior) & $0.613$ & $+0.047$ & $[+0.041,\,+0.053]$ & $<10^{-3}$ & $1.21$ \\
DEMix router \citep{DEMix2022} & $0.612$ & $+0.047$ & $[+0.041,\,+0.054]$ & $<10^{-3}$ & $1.09$ \\
Equal weights & $0.564$ & $+0.096$ & $[+0.088,\,+0.104]$ & $<10^{-3}$ & $1.82$ \\
Shuffled field (within doc) & $0.541$ & $+0.119$ & $[+0.110,\,+0.128]$ & $<10^{-3}$ & $2.02$ \\
\bottomrule
\end{tabular}
\end{adjustbox}
\end{table}
\endgroup
Now we consider learned byte-level experts, first pairing prose and code on masked software-documentation windows. On a single corruption seed, local evidence reaches $0.534$, above the hard label router ($0.514$), best single expert ($0.507$), global weight ($0.478$), and equal weights ($0.440$); the adapted fusion baselines fall in the same band ($0.508$ PackLLM-sim, $0.504$ PackLLM-opt, $0.507$ marginal, $0.492$ DEMix). Averaged over three corruption seeds the same ordering holds within $0.003$ per method (Table~S5). The largest gain is in code regions, where accuracy rises from $0.410$ (equal) to $0.577$, likely because the soft field composes experts near a boundary rather than committing to one region. We then add a configuration expert for the main three-expert setting; each window spans at least two regimes with a minimum length so no region dominates. Across $256$ windows from $160$ documents, local evidence reaches document-weighted accuracy $0.660$ (Table~\ref{tab:byte3}), indistinguishable from the hard label router ($p=0.50$) and above the best single expert, the independently optimized global weight, equal weights, every individual expert, and the adapted literature baselines (all $p<10^{-3}$). Shuffling the field within a document costs $0.119$ and across documents $0.181$, so restoration depends on where the weights are placed, not only their average; the field recovers the three regimes at macro one-versus-rest AUROC $0.993$, and an iterative decoder preserves the ordering.

Figure~\ref{fig:flagship} shows representative windows and a failure case chosen by prespecified rules: in the two median windows the field approaches each region's expert in the interior and turns within a few positions of the boundary, while in the lowest-performing window for the harder pair it identifies the prose and code interiors but briefly weights the absent configuration expert near their transition. Errors thus concentrate at region boundaries.

\begin{figure}[t]
\centering
\includegraphics[width=\linewidth]{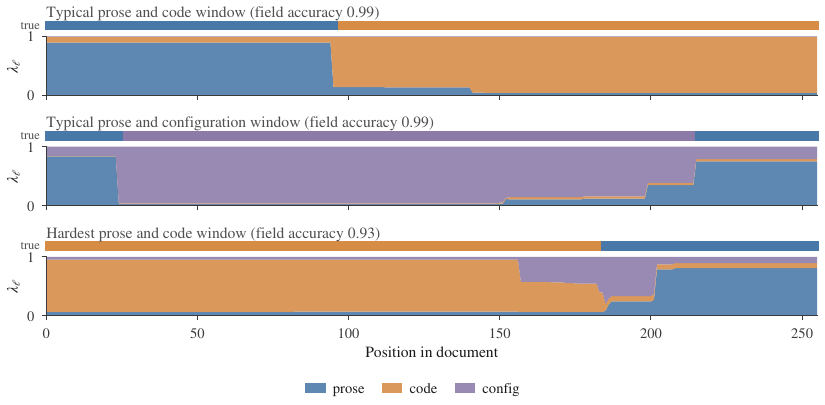}
\caption{Recovered fields for constructed windows. Ribbon: true regions; curves: weights $\lambda_\ell$ inferred from the masked window alone.}
\label{fig:flagship}
\end{figure}

\subsection{Pretrained experts at scale}
\label{sec:exp_scale}

We then ask whether the same behavior persists when the experts are adapted from a large pretrained model. We fine-tune prose and code experts with rank-$16$ LoRA from a shared $1.3$B discrete flow-matching model. The evaluation uses constructed windows sampled with replacement. The statistical unit here is the window rather than the source document, since source-document identities are not tracked in this setting; possible reuse of source material across windows is therefore a limitation.

\paragraph{These results fall outside Proposition~\ref{prop:seq_gradient}'s assumptions.} The denoising-loss argument of Section~\ref{sec:method} is stated for the mask-source path, but this $1.3$B base model has a \emph{uniform} source, so at intermediate times a corrupted position holds a random token rather than a mask symbol. We have no analogue of the identity for that path, so here the per-position denoising score is a heuristic energy rather than an order-averaged conditional log score, and the mismatch is uncontrolled. This applies to every result in this subsection and to both natural-document settings, which is to say to the results we lead with. Their high field accuracy shows the surrogate stays informative; it does not validate the identity for a uniform source. The supplement discusses the uniform-source mismatch in full.

Local evidence reaches $0.653$, above the independently optimized global weight ($0.611$; $+0.042$, $95\%$ CI $[+0.039,+0.046]$, $p<10^{-3}$, $d_z=1.47$), the best single expert ($0.599$), and equal weights ($0.599$; Table~\ref{tab:scale}); the adapted PackLLM and marginal-evidence baselines track the global family, all below local ($p<10^{-3}$), and the field agrees with the true regions at $0.995$. A single expert fine-tuned on the \emph{union} of both domains, the ``train one model on everything'' alternative, needs pooled data our setting does not assume yet reaches only $0.636$ (above the global weight, below local; $+0.017$, $p<10^{-3}$) and falls to $0.649$ on the README documents of Section~\ref{sec:natural}, below both the global weight and the best single expert. This union expert receives the same $6000$ fine-tuning steps as each specialist while covering twice the domain diversity, so it is undertrained relative to a specialist by construction and the comparison is not compute-matched; it bounds what one model trained on the pooled domains achieves at a matched step count, not at a matched budget. The independent global is not detectably different from the field-average reference ($0.611$ versus $0.610$, $p=0.72$), the privileged hard label router is $0.002$ higher than local evidence ($0.655$, $p=0.01$), and the ordering is stable across mask rates.

\begin{table}[t]
\centering
\caption{Fine-tuned $1.3$B experts, $256$ windows, mask rate $0.3$. Paired differences vs local evidence. $^{*}p<0.05$, $^{***}p<10^{-3}$.}
\label{tab:scale}
\small
\setlength{\tabcolsep}{4pt}
\begin{tabular}{@{}lcc@{}}
\toprule
Method & Acc. & $\Delta$ vs ours \\
\midrule
\textbf{Local evidence} & $\mathbf{0.653}$ & $\cdot$ \\
Hard label router & $0.655$ & $-0.002^{*}$ \\
Single generalist (union) & $0.636$ & $+0.017^{***}$ \\
PackLLM-opt & $0.616$ & $+0.037^{***}$ \\
Independent global & $0.611$ & $+0.042^{***}$ \\
PackLLM-sim & $0.605$ & $+0.048^{***}$ \\
Best single (own post.) & $0.599$ & $+0.054^{***}$ \\
Best single (marginal) & $0.599$ & $+0.055^{***}$ \\
Equal weights & $0.599$ & $+0.054^{***}$ \\
Shuffled field & $0.570$ & $+0.083^{***}$ \\
Prose expert & $0.511$ & $+0.142^{***}$ \\
\midrule
Field accuracy & \multicolumn{2}{c}{$0.995$} \\
\bottomrule
\end{tabular}
\end{table}

\subsection{Local routers without the evidence correction}
\label{sec:exp_routers}

The preceding comparisons show that position-dependent weights can help, but they do not establish that the posterior-minus-prior correction is responsible. We therefore compare against three routers that replace the evidence update with a local heuristic: normalized per-expert scores (a DEMix-style responsibility router \citep{DEMix2022}), raw denoising loss without the prior term, and a two-state HMM over the per-expert scores.

In the two pretrained settings, the responsibility, raw-loss and HMM routers perform at or below equal weights and trail local evidence ($p<10^{-3}$; Table~\ref{tab:routers}); at $1.3$B the first two land on equal weights to within $0.001$. They also trail on the arXiv documents of Section~\ref{sec:natural} ($0.653$, $0.650$ and $0.644$ against $0.691$; Table~\ref{tab:arxiv}). Their field accuracy is far lower throughout ($0.80$, $0.81$ and $0.72$ against $0.995$ at $1.3$B; $0.56$, $0.56$ and $0.51$ against $0.765$ on README). Local evidence is thus the only router that combines strong restoration with high field accuracy across settings.

We caution that this section does not isolate the posterior-minus-prior correction on its own. Each comparison router differs from local evidence in more than one way at once; the raw local-loss router, for instance, drops both the prior expectation and the imputed-particle posterior estimate, since the adapted baselines score only visible tokens. Attributing the gap to the prior term alone would need an ablation that keeps the deployed posterior term and removes only the prior expectation, which we have not run.

\begin{table}[t]
\centering
\small
\setlength{\tabcolsep}{3pt}
\caption{Local routers that omit the posterior-minus-prior correction, across the three settings; each router cell is masked-token accuracy / field accuracy. References are constant fields (accuracy only); the global weight is the independent one at byte and $1.3$B and the jointly optimized one on README, which are indistinguishable where both were run. The byte and README columns are document-weighted; the $1.3$B column takes the window as the unit, as declared in Section~\ref{sec:exp_scale}. The HMM router was not run at byte scale.}
\label{tab:routers}
\begin{tabular}{@{}lccc@{}}
\toprule
Method & Byte & $1.3$B & README \\
\midrule
\textbf{Local evidence} & $\mathbf{.66/.97}$ & $\mathbf{.65/.99}$ & $\mathbf{.71/.77}$ \\
DEMix resp. & $.61/.96$ & $.60/.80$ & $.69/.56$ \\
Raw local-loss & $.55/.57$ & $.60/.81$ & $.69/.56$ \\
HMM & n/a & $.57/.72$ & $.68/.51$ \\
\midrule
Equal (ref.) & $.56$ & $.60$ & $.71$ \\
Global weight & $.62$ & $.61$ & $.71$ \\
PackLLM-opt & $.62$ & $.62$ & $.71$ \\
\bottomrule
\end{tabular}
\end{table}

\begin{table}[t]
\centering
\small
\setlength{\tabcolsep}{4pt}
\caption{Distinct natural experts: arXiv prose and \LaTeX{} mathematics, $256$ windows from $71$ documents, mask rate $0.3$, document-clustered. Field: field accuracy ($\cdot$ = constant field). Best in each column is bold, the oracle label router excepted. $^{***}p<10^{-3}$.}
\label{tab:arxiv}
\begin{tabular}{@{}lccc@{}}
\toprule
Method & Acc. & Field & $\Delta$ \\
\midrule
Oracle labels & $0.702$ & $\cdot$ & $-0.011^{***}$ \\
\textbf{Local evidence} & $\mathbf{0.691}$ & $\mathbf{0.852}$ & $\cdot$ \\
Global weight (independent) & $0.682$ & $\cdot$ & $+0.010^{***}$ \\
PackLLM-opt & $0.679$ & $\cdot$ & $+0.012^{***}$ \\
PackLLM-sim & $0.676$ & $\cdot$ & $+0.015^{***}$ \\
Equal weights & $0.673$ & $\cdot$ & $+0.018^{***}$ \\
Best single & $0.669$ & $\cdot$ & $+0.022^{***}$ \\
Best single (marg.) & $0.666$ & $\cdot$ & $+0.024^{***}$ \\
DEMix router & $0.653$ & $0.621$ & $+0.038^{***}$ \\
Raw local-loss & $0.650$ & $0.623$ & $+0.041^{***}$ \\
HMM router & $0.644$ & $0.555$ & $+0.049^{***}$ \\
\bottomrule
\end{tabular}
\end{table}

\subsection{Naturally mixed documents}
\label{sec:natural}

Constructed mixtures have known boundaries; we now test whether local composition helps on documents whose regions arise naturally. Scientific articles interleave natural-language prose with \LaTeX{} mathematics, two lexically distant regimes that co-occur in every paper. We fine-tune prose and mathematics experts from the $1.3$B model on collected arXiv sources and label each token by its math environment; the two experts separate strongly, with margins of $0.47$ and $0.46$ nats. Across $256$ windows from $71$ documents with document-clustered statistics, local evidence reaches accuracy $0.691$ and outperforms an independently optimized global weight by $0.010$ ($p<10^{-3}$, $d_z=0.56$), equal weights by $0.018$, and the best single expert by $0.022$ (Table~\ref{tab:arxiv}), while recovering the regions at field accuracy $0.852$. The adapted global-fusion baselines match the global weight (PackLLM-opt $+0.012$, PackLLM-sim $+0.015$, marginal-evidence $+0.024$; all $p<10^{-3}$), and the local routers of Section~\ref{sec:exp_routers} trail it further. Local evidence is therefore the only method here that both improves restoration over the global family and attributes the regions faithfully.

Whether local composition improves on a global weight in natural documents appears to depend on how distinct the experts are. When they converge on the evaluation text, as with prose and code experts fine-tuned on matched README corpora, we detect no restoration gain over a global weight ($0.709$ versus $0.708$), and the field instead contributes regional attribution (field accuracy $0.765$). This is consistent with the constructed-mixture trend in which greater separation yields a larger gain, but it rests on two natural settings, so we report it as an observation rather than as a rule that predicts the regime in advance.

\section{Conclusion}
\label{sec:conclusion}

We introduced evidence-aligned local composition, a test-time method for restoring corrupted discrete sequences with frozen experts and no region supervision or trained router: instead of one expert per sequence, it infers a simplex-valued field from the corrupted observation, whose evidence gradient, for a normalized sequence energy model, is exactly a posterior-minus-prior difference of expert energies estimated from the experts' denoising losses (the deployed algorithm only approximates this, so we call it evidence-aligned). The most consistent result is recovery of position-wise expert preference: on constructed windows the field tracks the known regions and outperforms an independently optimized global weight, and on natural documents the gain tracks expert separation, improving over the global family on distinct prose and \LaTeX{} mathematics and reducing to attribution on converged README prose and code. Three limitations. First, the schedule-weighted likelihood identity requires a Bayes-optimal mask-source denoiser on supported contexts, whereas the deployed unweighted score remains a surrogate, the composed-logit sampler is not known to draw from $Q_\lambda$, and the $1.3$B experts have a uniform source for which we have no analogue of the identity at all. Second, composition helps when the experts are specialized and distinct, its advantage over a global weight vanishing when they converge. Third, our comparisons do not isolate the posterior-minus-prior correction on its own, since every local router we compare against differs from local evidence in more than one way at once; an ablation that keeps the deployed posterior term and removes only the prior expectation remains open.

\section*{Acknowledgments}
Shuchin Aeron would like to acknowledge support by NSF under Cooperative Agreement PHY-2019786 and NSF DMS 2309519. 

\newpage

\bibliography{references}

@inproceedings{Austin2021,
  author    = {Austin, J. and Johnson, D. D. and Ho, J. and Tarlow, D. and van den Berg, R.},
  title     = {Structured denoising diffusion models in discrete state-spaces},
  booktitle = {NeurIPS},
  year      = {2021}
}

@inproceedings{DiME2026,
  author    = {Wang, F. and Bouman, K. L.},
  title     = {Sample-efficient evidence estimation of score-based priors for model selection},
  booktitle = {International Conference on Learning Representations (ICLR)},
  year      = {2026},
  note      = {arXiv:2602.20549}
}

@misc{FactorDiff2026,
  author        = {Huang, H. and Xu, Y. and Mandal, A. and Aspuru-Guzik, A.},
  title         = {From global to factor-wise expert composition in discrete diffusion models},
  year          = {2026},
  eprint        = {2607.11758},
  archivePrefix = {arXiv}
}

@misc{WangBouman2026,
  author        = {Wang, F. and Bouman, K. L.},
  title         = {Optimizing diffusion priors in image reconstruction from a single observation},
  year          = {2026},
  eprint        = {2604.21066},
  archivePrefix = {arXiv}
}

@inproceedings{wPoE2025,
  author    = {Zhang, Q. and Li, M. and Wang, Z. and Liao, R. and Wang, L.},
  title     = {Test-time steering for lossless text compression via weighted product of experts},
  booktitle = {Findings of the Association for Computational Linguistics: EMNLP},
  pages     = {2076--2088},
  year      = {2025},
  note      = {arXiv:2511.10660}
}

@inproceedings{Campbell2022,
  author    = {Campbell, A. and Benton, J. and De Bortoli, V. and Rainforth, T. and Deligiannidis, G. and Doucet, A.},
  title     = {A continuous time framework for discrete denoising models},
  booktitle = {NeurIPS},
  year      = {2022}
}

@inproceedings{DEMix2022,
  author    = {Gururangan, S. and Lewis, M. and Holtzman, A. and Smith, N. A. and Zettlemoyer, L.},
  title     = {{DEMix} layers: Disentangling domains for modular language modeling},
  booktitle = {NAACL},
  year      = {2022}
}

@inproceedings{Du2023,
  author    = {Du, Y. and Durkan, C. and Strudel, R. and Tenenbaum, J. B. and Dieleman, S. and Fergus, R. and Sohl-Dickstein, J. and Doucet, A. and Grathwohl, W.},
  title     = {Reduce, reuse, recycle: Compositional generation with energy-based diffusion models and {MCMC}},
  booktitle = {ICML},
  year      = {2023}
}

@inproceedings{Gat2024,
  author    = {Gat, I. and Remez, T. and Shaul, N. and Kreuk, F. and Chen, R. T. Q. and Synnaeve, G. and Adi, Y. and Lipman, Y.},
  title     = {Discrete flow matching},
  booktitle = {NeurIPS},
  year      = {2024}
}

@article{Hinton2002,
  author  = {Hinton, G. E.},
  title   = {Training products of experts by minimizing contrastive divergence},
  journal = {Neural Computation},
  volume  = {14},
  number  = {8},
  year    = {2002}
}

@inproceedings{Hoogeboom2022,
  author    = {Hoogeboom, E. and Gritsenko, A. A. and Bastings, J. and Poole, B. and van den Berg, R. and Salimans, T.},
  title     = {Autoregressive diffusion models},
  booktitle = {ICLR},
  year      = {2022}
}

@inproceedings{Liu2022,
  author    = {Liu, N. and Li, S. and Du, Y. and Torralba, A. and Tenenbaum, J. B.},
  title     = {Compositional visual generation with composable diffusion models},
  booktitle = {ECCV},
  year      = {2022}
}

@inproceedings{Lou2024,
  author    = {Lou, A. and Meng, C. and Ermon, S.},
  title     = {Discrete diffusion modeling by estimating the ratios of the data distribution},
  booktitle = {ICML},
  year      = {2024}
}

@article{MacKay1992,
  author  = {MacKay, D. J. C.},
  title   = {Bayesian interpolation},
  journal = {Neural Computation},
  volume  = {4},
  number  = {3},
  year    = {1992}
}

@misc{Mavromatis2024,
  author        = {Mavromatis, C. and Karypis, P. and Karypis, G.},
  title         = {Pack of {LLMs}: Model fusion at test-time via perplexity optimization},
  year          = {2024},
  eprint        = {2404.11531},
  archivePrefix = {arXiv}
}

@inproceedings{Ou2024,
  author    = {Ou, J. and Nie, S. and Xue, K. and Zhu, F. and Sun, J. and Li, Z. and Li, C.},
  title     = {Your absorbing discrete diffusion secretly models the conditional distributions of clean data},
  booktitle = {ICLR},
  year      = {2025},
  note      = {arXiv:2406.03736}
}

@inproceedings{Sahoo2024,
  author    = {Sahoo, S. S. and Arriola, M. and Schiff, Y. and Gokaslan, A. and Marroquin, E. and Chiu, J. T. and Rush, A. and Kuleshov, V.},
  title     = {Simple and effective masked diffusion language models},
  booktitle = {NeurIPS},
  year      = {2024}
}

@inproceedings{Shi2024,
  author    = {Shi, J. and Han, K. and Wang, Z. and Doucet, A. and Titsias, M. K.},
  title     = {Simplified and generalized masked diffusion for discrete data},
  booktitle = {NeurIPS},
  year      = {2024}
}

@inproceedings{Uria2014,
  author    = {Uria, B. and Murray, I. and Larochelle, H.},
  title     = {A deep and tractable density estimator},
  booktitle = {ICML},
  year      = {2014}
}

@inproceedings{Lipman2023,
  author    = {Lipman, Y. and Chen, R. T. Q. and Ben-Hamu, H. and Nickel, M. and Le, M.},
  title     = {Flow matching for generative modeling},
  booktitle = {ICLR},
  year      = {2023}
}

@misc{Nie2025LLaDA,
  author        = {Nie, S. and Zhu, F. and You, Z. and Zhang, X. and Ou, J. and Hu, J. and Zhou, J. and Lin, Y. and Wen, J.-R. and Li, C.},
  title         = {Large language diffusion models},
  year          = {2025},
  eprint        = {2502.09992},
  archivePrefix = {arXiv}
}

@inproceedings{Gong2025,
  author    = {Gong, S. and Agarwal, S. and Zhang, Y. and Ye, J. and Zheng, L. and Li, M. and An, C. and Zhao, P. and Bi, W. and Peng, H. and Han, J. and Kong, L.},
  title     = {Scaling diffusion language models via adaptation from autoregressive models},
  booktitle = {ICLR},
  year      = {2025}
}

@inproceedings{Nie2025Scaling,
  author    = {Nie, S. and Zhu, F. and Du, C. and Pang, T. and Liu, Q. and Zeng, G. and Lin, M. and Li, C.},
  title     = {Scaling up masked diffusion models on text},
  booktitle = {ICLR},
  year      = {2025}
}

@misc{Dream2025,
  author        = {Ye, J. and Xie, Z. and Zheng, L. and Gao, J. and Wu, Z. and Jiang, X. and Li, Z. and Kong, L.},
  title         = {Dream 7{B}: Diffusion large language models},
  year          = {2025},
  eprint        = {2508.15487},
  archivePrefix = {arXiv}
}

@inproceedings{Nisonoff2025,
  author    = {Nisonoff, H. and Xiong, J. and Allenspach, S. and Listgarten, J.},
  title     = {Unlocking guidance for discrete state-space diffusion and flow models},
  booktitle = {ICLR},
  year      = {2025}
}

@inproceedings{Schiff2025,
  author    = {Schiff, Y. and Sahoo, S. S. and Phung, H. and Wang, G. and Boshar, S. and Dalla-Torre, H. and Almeida, B. P. and Rush, A. M. and Pierrot, T. and Kuleshov, V.},
  title     = {Simple guidance mechanisms for discrete diffusion models},
  booktitle = {ICLR},
  year      = {2025}
}

@inproceedings{Garipov2023,
  author    = {Garipov, T. and De Peuter, S. and Yang, G. and Garg, V. and Kaski, S. and Jaakkola, T.},
  title     = {Compositional sculpting of iterative generative processes},
  booktitle = {NeurIPS},
  year      = {2023}
}

@inproceedings{Skreta2025,
  author    = {Skreta, M. and Atanackovic, L. and Bose, A. J. and Tong, A. and Neklyudov, K.},
  title     = {The superposition of diffusion models using the {It\^o} density estimator},
  booktitle = {ICLR},
  year      = {2025}
}

@inproceedings{Chung2023,
  author    = {Chung, H. and Kim, J. and McCann, M. T. and Klasky, M. L. and Ye, J. C.},
  title     = {Diffusion posterior sampling for general noisy inverse problems},
  booktitle = {ICLR},
  year      = {2023}
}

@inproceedings{Kawar2022,
  author    = {Kawar, B. and Elad, M. and Ermon, S. and Song, J.},
  title     = {Denoising diffusion restoration models},
  booktitle = {NeurIPS},
  year      = {2022}
}

@inproceedings{Wan2024,
  author    = {Wan, F. and Huang, X. and Cai, D. and Quan, X. and Bi, W. and Shi, S.},
  title     = {Knowledge fusion of large language models},
  booktitle = {ICLR},
  year      = {2024}
}

@inproceedings{Jiang2023,
  author    = {Jiang, D. and Ren, X. and Lin, B. Y.},
  title     = {{LLM-Blender}: Ensembling large language models with pairwise ranking and generative fusion},
  booktitle = {ACL},
  year      = {2023}
}

@misc{Sukhbaatar2024,
  author        = {Sukhbaatar, S. and Golovneva, O. and Sharma, V. and Xu, H. and Lin, X. V. and Rozi\`ere, B. and Kahn, J. and Li, D. and Yih, W.-t. and Weston, J. and Li, X.},
  title         = {Branch-train-{MiX}: Mixing expert {LLMs} into a mixture-of-experts {LLM}},
  year          = {2024},
  eprint        = {2403.07816},
  archivePrefix = {arXiv}
}

@inproceedings{Merity2017,
  title={Pointer Sentinel Mixture Models},
  author={Merity, Stephen and Xiong, Caiming and Bradbury, James and Socher, Richard},
  booktitle={International Conference on Learning Representations},
  year={2017}
}

@article{Husain2019,
  title={CodeSearchNet Challenge: Evaluating the State of Semantic Code Search},
  author={Husain, Hamel and Wu, Ho-Hsiang and Gazit, Tiferet and Allamanis, Miltiadis and Brockschmidt, Marc},
  journal={arXiv preprint arXiv:1909.09436},
  year={2019}
}

@misc{TheStackYamlK8s,
  title={the-stack-yaml-k8s: Kubernetes YAML manifests from The Stack},
  author={{Substratus AI}},
  year={2023},
  howpublished={Hugging Face dataset, \url{https://huggingface.co/datasets/substratusai/the-stack-yaml-k8s}}
}

@misc{CodeParrotClean,
  title={codeparrot-clean: a deduplicated Python subset of GitHub code},
  author={{CodeParrot}},
  year={2022},
  howpublished={Hugging Face dataset, \url{https://huggingface.co/datasets/codeparrot/codeparrot-clean}}
}

@misc{ProofPile,
  title={proof-pile: a corpus of mathematical text},
  author={{Hoskinson Center for Formal Mathematics}},
  year={2022},
  howpublished={Hugging Face dataset, \url{https://huggingface.co/datasets/hoskinson-center/proof-pile}}
}

@misc{GithubMarkdown,
  title={github-top1000-repos-markdown: README and Markdown files from the 1000 most-starred GitHub repositories},
  author={{Riabov, Maksim}},
  year={2024},
  howpublished={Hugging Face dataset, \url{https://huggingface.co/datasets/MRiabov/github-top1000-repos-markdown}}
}

@article{Tibshirani2005,
  title={Sparsity and smoothness via the fused lasso},
  author={Tibshirani, Robert and Saunders, Michael and Rosset, Saharon and Zhu, Ji and Knight, Keith},
  journal={Journal of the Royal Statistical Society: Series B},
  volume={67},
  number={1},
  pages={91--108},
  year={2005}
}

@article{Condat2013,
  title={A direct algorithm for 1-{D} total variation denoising},
  author={Condat, Laurent},
  journal={IEEE Signal Processing Letters},
  volume={20},
  number={11},
  pages={1054--1057},
  year={2013}
}
\bibliographystyle{unsrtnat}

\newpage
\begin{appendices}

\renewcommand{\theequation}{S\arabic{equation}}\setcounter{equation}{0}
\renewcommand{\thetable}{S\arabic{table}}\setcounter{table}{0}
\renewcommand{\thefigure}{S\arabic{figure}}\setcounter{figure}{0}
\makeatletter
\def\theHequation{S\arabic{equation}}
\def\theHtable{S\arabic{table}}
\def\theHfigure{S\arabic{figure}}
\makeatother

Equations, tables and figures introduced in this appendix carry an \textsf{S}
prefix and appendix sections are lettered; plain numbers refer to the main
text. Proposition numbering continues from the main text, whose
Proposition~\mpPropSeqGradient{} is proved below.

\bigskip

\section{Proofs}
\label{app:proofs}

Appendix~\ref{app:gradient} proves Proposition~\mpPropSeqGradient{} of the main text, the sequence-level evidence gradient that the method rests on. Appendix~\ref{app:factorized} proves the position-factorized special case, which is the form used in the controlled simulator.

\subsection{The sequence-level evidence gradient}
\label{app:gradient}

We first fix notation and state the standing assumptions, then prove the identity in full.

\paragraph{Setting.}
The vocabulary $\V$ is finite and the length $L$ is fixed, so the state space $\V^L$ is finite. Each energy $\phi_{i,\ell}:\V^L\to\R$ is finite valued. For $\lambda\in\R^{L\times m}$ write
\begin{equation}
\begin{gathered}
\Phi_\lambda(x)=\textstyle\sum_{\ell=1}^{L}\sum_{i=1}^{m}\lambda_{\ell,i}\,\phi_{i,\ell}(x),\\
Z(\lambda)=\textstyle\sum_{x'\in\V^L}\exp\Phi_\lambda(x'),
\end{gathered}
\label{eq:app_seqpool}
\end{equation}
so that the composed prior of \mpEqSeqPool{} is $Q_\lambda(x)=\exp\Phi_\lambda(x)/Z(\lambda)$. The channel $\pc(y\mid x)$ is arbitrary subject to $p_\lambda(y)=\sum_x\pc(y\mid x)Q_\lambda(x)>0$, and the posterior is $Q_\lambda(x\mid y)=\pc(y\mid x)Q_\lambda(x)/p_\lambda(y)$. No structure on the channel is needed: it need not be memoryless, and $\phi_{i,\ell}$ may depend on the whole sequence.

\begin{proof}[Proof of Proposition~\mpPropSeqGradient]
\emph{Step 1: the composed prior is positive and smooth in $\lambda$.}
Each $\Phi_\lambda(x')$ is linear in $\lambda$ with finite coefficients $\phi_{i,\ell}(x')$, so $\exp\Phi_\lambda(x')$ is positive and infinitely differentiable in $\lambda$ on all of $\R^{L\times m}$. As $\V^L$ is finite, $Z(\lambda)$ is a finite sum of such terms, hence $Z(\lambda)>0$ and $Z$ is smooth. Therefore $Q_\lambda(x)>0$ and $\lambda\mapsto Q_\lambda(x)$ is smooth for every $x$.

\emph{Step 2: the evidence is smooth, so the derivative exists.}
$p_\lambda(y)=\sum_{x}\pc(y\mid x)Q_\lambda(x)$ is a finite sum of smooth functions of $\lambda$ and is therefore smooth. It is positive by assumption, so $\log p_\lambda(y)$ is smooth, in particular differentiable on the relative interior of $\simplex_m^L$, and the posterior $Q_\lambda(\cdot\mid y)$ is a well defined distribution on $\V^L$. Every interchange of differentiation and summation below is over a finite index set and so needs no further justification.

\emph{Step 3: the score of the composed prior.}
Since $\partial\Phi_\lambda(x)/\partial\lambda_{\ell,i}=\phi_{i,\ell}(x)$, differentiating $Z$ term by term gives
\[\begin{aligned}
\frac{\partial\log Z(\lambda)}{\partial\lambda_{\ell,i}}
&=\frac{1}{Z(\lambda)}\textstyle\sum_{x'}\exp\big(\Phi_\lambda(x')\big)\,\phi_{i,\ell}(x')\\
&=\E_{x'\sim Q_\lambda}\big[\phi_{i,\ell}(x')\big].
\end{aligned}\]
Because $\log Q_\lambda(x)=\Phi_\lambda(x)-\log Z(\lambda)$, subtracting yields
\begin{equation}
\frac{\partial\log Q_\lambda(x)}{\partial\lambda_{\ell,i}}=\phi_{i,\ell}(x)-\E_{x'\sim Q_\lambda}\big[\phi_{i,\ell}(x')\big].
\label{eq:prior_score}
\end{equation}
This is the exponential-family identity: the derivative of the log normalizer is the mean of the sufficient statistic, here the energy $\phi_{i,\ell}$.

\emph{Step 4: differentiate the evidence.}
Differentiating $p_\lambda(y)$ term by term and using $\partial Q_\lambda=Q_\lambda\,\partial\log Q_\lambda$, which is legitimate because $Q_\lambda(x)>0$ by Step 1,
\[\begin{aligned}
\frac{\partial\log p_\lambda(y)}{\partial\lambda_{\ell,i}}
&=\frac{1}{p_\lambda(y)}\textstyle\sum_{x}\pc(y\mid x)\,\frac{\partial Q_\lambda(x)}{\partial\lambda_{\ell,i}}\\
&=\textstyle\sum_{x}\underbrace{\frac{\pc(y\mid x)\,Q_\lambda(x)}{p_\lambda(y)}}_{=\;Q_\lambda(x\mid y)}\\
&\qquad\times\,\frac{\partial\log Q_\lambda(x)}{\partial\lambda_{\ell,i}}\\
&=\E_{x\sim Q_\lambda(\cdot\mid y)}\Big[\frac{\partial\log Q_\lambda(x)}{\partial\lambda_{\ell,i}}\Big].
\end{aligned}\]
The bracketed weight is nonnegative and sums to one over $x$, so the last line is an expectation under the posterior.

\emph{Step 5: conclude.}
Substitute \eqref{eq:prior_score}. The prior expectation $\E_{x'\sim Q_\lambda}[\phi_{i,\ell}(x')]$ does not depend on $x$, so it passes through the posterior expectation unchanged, leaving
\[\begin{aligned}
\frac{\partial\log p_\lambda(y)}{\partial\lambda_{\ell,i}}
&=\E_{x\sim Q_\lambda(\cdot\mid y)}\big[\phi_{i,\ell}(x)\big]\\
&\quad-\E_{x\sim Q_\lambda}\big[\phi_{i,\ell}(x)\big],
\end{aligned}\]
which is \mpEqSeqGradient{} and proves the proposition.
\end{proof}

\subsection{The factorized special case}
\label{app:factorized}

Throughout this subsection we fix a position $\ell$ and suppress it when no ambiguity arises. Expert $i$ supplies the categorical distribution $\pi_{i,\ell}$ at position $\ell$, and $a_i(v)=\log\pi_{i,\ell}(v)$. For $\lambda_\ell\in\simplex_m$, the composed local prior is the normalized log-linear pool from \mpEqPool:
\begin{equation}
\begin{gathered}
\pi_{\lambda_\ell,\ell}(v)=\frac{\exp A_{\lambda_\ell}(v)}{Z_{\lambda_\ell}},\\
A_{\lambda_\ell}(v)=\textstyle\sum_{j=1}^m \lambda_{\ell,j}\,a_j(v),\quad Z_{\lambda_\ell}=\textstyle\sum_{u\in\V}\exp A_{\lambda_\ell}(u).
\end{gathered}
\label{eq:app_pool}
\end{equation}
We assume every expert assigns positive probability to every token, so that $a_i(v)$ is finite for all $i$ and $v$. This is the only regularity condition required for the local derivative arguments below, and it holds for the softmax parameterization of the experts we use.

We call the corruption channel \emph{memoryless} when it acts on each position independently, so that it factorizes as $\pc(y\mid x)=\prod_{\ell=1}^{L}\pc(y_\ell\mid x_\ell)$. Both channels of Section~\mpSecSetup{} are memoryless: the mask channel replaces each token by $\mathtt{[MASK]}$ independently with probability $r$, and the replacement channel replaces each token by an independent uniform draw from $\V$.

For position-factorized experts the composed prior factorizes in the same way, $Q_\lambda(x)=\prod_{\ell}\pi_{\lambda_\ell,\ell}(x_\ell)$. Substituting both factorizations into the marginal likelihood and exchanging the sum over sequences for a product of per-position sums,
\begin{equation*}
\begin{aligned}
p_\lambda(y)&=\sum_{x\in\V^L}\prod_{\ell=1}^{L}\pc(y_\ell\mid x_\ell)\,\pi_{\lambda_\ell,\ell}(x_\ell)\\
&=\prod_{\ell=1}^{L}\sum_{v\in\V}\pc(y_\ell\mid v)\,\pi_{\lambda_\ell,\ell}(v),
\end{aligned}
\end{equation*}
where the second equality is the distributive law: a sum over all sequences of a product of per-position factors equals the product of the per-position sums. Taking logarithms, $\log p_\lambda(y)=\sum_{\ell}\log\sum_{v\in\V}\pc(y_\ell\mid v)\,\pi_{\lambda_\ell,\ell}(v)$ is a sum of per-position terms, the $\ell$-th of which depends on $\lambda$ only through $\lambda_\ell$. Averaging over $N$ shared observations $y^{(1)},\dots,y^{(N)}$ and writing $c_{n,\ell}(v)=\pc(y^{(n)}_\ell\mid x_\ell=v)$ recovers the per-position objective $L_\ell(\lambda_\ell)$ of \mpEqEvidence, whose maximization decouples across positions.

Its gradient can therefore be read at a single position. Fix $\ell$ and, for observation $n$, define the channel posterior over the clean token as
\begin{equation}\gamma_{n,\ell}(v)=\frac{c_{n,\ell}(v)\,\pi_{\lambda_\ell,\ell}(v)}{\sum_{u} c_{n,\ell}(u)\,\pi_{\lambda_\ell,\ell}(u)}. \label{eq:token_post}\end{equation}

\begin{proposition}[Local evidence gradient]
\label{prop:gradient}
For $\lambda_\ell$ in the relative interior of $\simplex_m$, the objective \mpEqEvidence{} is differentiable and
\begin{equation}
\begin{aligned}
\frac{\partial L_\ell(\lambda_\ell)}{\partial \lambda_{\ell,i}}={}&\tfrac1N\textstyle\sum_{n=1}^N \E_{v\sim\gamma_{n,\ell}}\big[a_i(v)\big]\\
&-\E_{v\sim\pi_{\lambda_\ell,\ell}}\big[a_i(v)\big].
\end{aligned}
\label{eq:gradient}
\end{equation}
\end{proposition}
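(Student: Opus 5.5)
The plan is to differentiate $L_\ell$ directly, term by term, and to reuse the exponential-family identity from Step~3 of the proof of Proposition~\mpPropSeqGradient{} at the level of a single position. Alternatively, I would note that this is the special case of Proposition~\mpPropSeqGradient{} with $\phi_{i,\ell}(x)=a_i(x_\ell)$ and a memoryless channel. Even then I would write the direct argument, because it handles the average over $N$ observations and exhibits the token posterior $\gamma_{n,\ell}$ explicitly.

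\emph{Step 1: regularity.} By the standing assumption every $a_i(v)$ is finite. Hence $A_{\lambda_\ell}(v)$ is linear in $\lambda_\ell$ with finite coefficients, $Z_{\lambda_\ell}>0$ is a finite sum of positive smooth terms, and $\pi_{\lambda_\ell,\ell}(v)>0$ depends smoothly on $\lambda_\ell$. I also need each inner sum $\sum_v c_{n,\ell}(v)\pi_{\lambda_\ell,\ell}(v)$ to be positive so that its logarithm is defined and smooth. This holds because $\pi_{\lambda_\ell,\ell}$ has full support and $c_{n,\ell}(v)=\pc(y^{(n)}_\ell\mid v)>0$ for at least one $v$ whenever $y^{(n)}_\ell$ is an observable output of the channel. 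I would state this as the implicit positivity assumption, mirroring $p_\lambda(y)>0$ in Proposition~\mpPropSeqGradient{}. All sums are over the finite set $\V$, so no interchange issues arise. Differentiability on the relative interior, indeed on all of $\R^m$, follows.

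\emph{Step 2: local score.} Differentiating $\log Z_{\lambda_\ell}$ gives $\partial\log Z_{\lambda_\ell}/\partial\lambda_{\ell,i}=\E_{v\sim\pi_{\lambda_\ell,\ell}}[a_i(v)]$. Therefore
\[
\frac{\partial\log\pi_{\lambda_\ell,\ell}(v)}{\partial\lambda_{\ell,i}}=a_i(v)-\E_{u\sim\pi_{\lambda_\ell,\ell}}[a_i(u)],
\]
which is the one-position analogue of \eqref{eq:prior_score}.

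\emph{Step 3: per-observation derivative.} For fixed $n$, write $\partial\pi=\pi\,\partial\log\pi$ and differentiate $\log\sum_v c_{n,\ell}(v)\pi_{\lambda_\ell,\ell}(v)$. The weights $c_{n,\ell}(v)\pi_{\lambda_\ell,\ell}(v)/\sum_u c_{n,\ell}(u)\pi_{\lambda_\ell,\ell}(u)$ that appear are exactly $\gamma_{n,\ell}(v)$ from \eqref{eq:token_post}. The derivative is therefore $\E_{v\sim\gamma_{n,\ell}}$ of the local score from Step~2. The prior mean is constant in $v$ and $\gamma_{n,\ell}$ sums to one, so the mean passes through the expectation and the result is $\E_{\gamma_{n,\ell}}[a_i]-\E_{\pi_{\lambda_\ell,\ell}}[a_i]$.

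\emph{Step 4: average.} Averaging over $n$ with linearity of the derivative gives \eqref{eq:gradient}. The prior term is the same for every $n$, so it survives averaging unchanged.

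The only point needing care is the positivity of the inner sums in Step~1, since the statement does not list it explicitly. The rest is a routine finite-sum computation identical in structure to Steps~3 to 5 of the proof of Proposition~\mpPropSeqGradient{}.
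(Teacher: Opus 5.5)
Your proposal is correct and follows the paper's proof essentially step for step. The paper first isolates the pool score $a_i(v)-\E_{u\sim\pi_{\lambda_\ell,\ell}}[a_i(u)]$ as a separate lemma. It then uses $\partial\pi=\pi\,\partial\log\pi$ to expose the channel posterior $\gamma_{n,\ell}$, pulls the $v$-independent prior mean out of the expectation, averages over $n$, and secures positivity of $\sum_v c_{n,\ell}(v)\pi_{\lambda_\ell,\ell}(v)$ just as you do, from one token with $c_{n,\ell}(v)>0$ plus full support of the pool.
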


We first isolate the derivative of the composed prior, which is where the normalizer of \eqref{eq:app_pool} enters and which is the source of the prior term in the final expression.

\begin{lemma}[Score of the log-linear pool]
\label{lem:pool_score}
For every $\lambda_\ell$ in the relative interior of $\simplex_m$ and every $v\in\V$,
\begin{equation}
\begin{aligned}
&\frac{\partial}{\partial\lambda_{\ell,i}}\log\pi_{\lambda_\ell,\ell}(v)\\
&\qquad=a_i(v)-\E_{u\sim\pi_{\lambda_\ell,\ell}}\big[a_i(u)\big].
\end{aligned}
\label{eq:pool_score}
\end{equation}
\end{lemma}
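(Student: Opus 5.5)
The plan is a direct computation from the definition \eqref{eq:app_pool}, mirroring Step~3 of the proof of Proposition~\mpPropSeqGradient{} in the single-position setting. Write $\log\pi_{\lambda_\ell,\ell}(v)=A_{\lambda_\ell}(v)-\log Z_{\lambda_\ell}$. By the standing assumption that every expert gives positive probability to every token, each $a_j(v)$ is finite. Hence $A_{\lambda_\ell}(v)$ is a linear function of $\lambda_\ell$ with finite coefficients, and $Z_{\lambda_\ell}$ is a finite sum of strictly positive smooth functions. So $\log Z_{\lambda_\ell}$ and $\log\pi_{\lambda_\ell,\ell}(v)$ are smooth on all of $\R^m$, in particular on the relative interior of $\simplex_m$. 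As in the sequence-level proof, the derivative is taken with respect to the unconstrained coordinate $\lambda_{\ell,i}$. The simplex constraint is not imposed on the gradient; this matches the convention stated before Proposition~\mpPropSeqGradient{}.

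First, differentiate the linear term: $\partial A_{\lambda_\ell}(v)/\partial\lambda_{\ell,i}=a_i(v)$. Second, differentiate the log normalizer term by term over the finite vocabulary. This needs no interchange justification beyond finiteness:
\[
\frac{\partial\log Z_{\lambda_\ell}}{\partial\lambda_{\ell,i}}=\frac{1}{Z_{\lambda_\ell}}\sum_{u\in\V}\exp\big(A_{\lambda_\ell}(u)\big)\,a_i(u)=\E_{u\sim\pi_{\lambda_\ell,\ell}}\big[a_i(u)\big].
\]
The second equality recognizes $\exp(A_{\lambda_\ell}(u))/Z_{\lambda_\ell}=\pi_{\lambda_\ell,\ell}(u)$. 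Subtracting the two derivatives gives \eqref{eq:pool_score}. Equivalently, the lemma is the special case of \eqref{eq:prior_score} with $L=1$ and $\phi_{i,\ell}=a_i$, and I would note this as a cross-check.

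No step is a genuine obstacle. The only point needing care is the regularity: finiteness of $a_i$ is what makes $A_{\lambda_\ell}$ well defined and $Z_{\lambda_\ell}>0$. I would state explicitly that this is where the positivity assumption is used, and that interchanging the derivative with the sum over $\V$ is trivial because $\V$ is finite.
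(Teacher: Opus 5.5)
Your proposal is correct and matches the paper's proof: you split $\log\pi_{\lambda_\ell,\ell}(v)=A_{\lambda_\ell}(v)-\log Z_{\lambda_\ell}$, differentiate the linear term to get $a_i(v)$, differentiate the log-normalizer term by term over the finite vocabulary to get $\E_{u\sim\pi_{\lambda_\ell,\ell}}[a_i(u)]$, and subtract. Your cross-check also holds, since the lemma is the single-position case of the sequence-level prior-score identity with energy $a_i$; the paper only describes it as the standard exponential-family identity.
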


\begin{proof}
By \eqref{eq:app_pool}, $\log\pi_{\lambda_\ell,\ell}(v)=A_{\lambda_\ell}(v)-\log Z_{\lambda_\ell}$. The first term is linear in $\lambda_\ell$ with $\partial A_{\lambda_\ell}(v)/\partial\lambda_{\ell,i}=a_i(v)$. For the second, the sum defining $Z_{\lambda_\ell}$ is finite and each summand is a positive, smooth function of $\lambda_\ell$, so we may differentiate term by term:
\[\begin{aligned}
\frac{\partial}{\partial\lambda_{\ell,i}}\log Z_{\lambda_\ell}
&=\frac{1}{Z_{\lambda_\ell}}\textstyle\sum_{u\in\V}\exp\big(A_{\lambda_\ell}(u)\big)a_i(u)\\
&=\textstyle\sum_{u\in\V}\pi_{\lambda_\ell,\ell}(u)\,a_i(u)=\E_{u\sim\pi_{\lambda_\ell,\ell}}\big[a_i(u)\big].
\end{aligned}\]
Subtracting gives \eqref{eq:pool_score}.
\end{proof}

Equation~\eqref{eq:pool_score} is the standard exponential-family identity: the derivative of the log-normalizer is the mean sufficient statistic under the current pool. The score is invariant to expert-specific additive constants in $a_i$; this invariance does not remove scale differences or state-dependent surrogate errors. It also has zero mean under the current pool, $\E_{v\sim\pi_{\lambda_\ell,\ell}}[\partial_{\lambda_{\ell,i}}\log\pi_{\lambda_\ell,\ell}(v)]=0$, which is the standard score-function identity.

\begin{proof}[Proof of Proposition~\ref{prop:gradient}]
Fix an observation index $n$ and write
\[E_{n,\ell}(\lambda_\ell)=\textstyle\sum_{v\in\V} c_{n,\ell}(v)\,\pi_{\lambda_\ell,\ell}(v),\quad \mathcal J_{n,\ell}=\log E_{n,\ell},\]
so that $L_\ell(\lambda_\ell)=N^{-1}\sum_{n=1}^N \mathcal J_{n,\ell}(\lambda_\ell)$ by \mpEqEvidence. Since $\V$ is finite and every $\pi_{\lambda_\ell,\ell}(v)$ is positive and smooth in the relative interior of the simplex, $E_{n,\ell}$ is a finite sum of smooth functions. Although $c_{n,\ell}$ need not have a positive uniform lower bound, at least one token satisfies $c_{n,\ell}(v)>0$. Full support gives $\pi_{\lambda_\ell,\ell}(v)>0$ for that token, so $E_{n,\ell}>0$ and $\log E_{n,\ell}$ is differentiable.

Using $\partial \pi = \pi\,\partial\log\pi$,
\[\begin{aligned}
\frac{\partial \mathcal J_{n,\ell}}{\partial\lambda_{\ell,i}}
&=\frac{1}{E_{n,\ell}(\lambda_\ell)}\textstyle\sum_{v\in\V} c_{n,\ell}(v)\,\partial_{\lambda_{\ell,i}} \pi_{\lambda_\ell,\ell}(v)\\
&=\textstyle\sum_{v\in\V}\underbrace{\frac{c_{n,\ell}(v)\,\pi_{\lambda_\ell,\ell}(v)}{E_{n,\ell}(\lambda_\ell)}}_{=\;\gamma_{n,\ell}(v)}\\
&\qquad\times\,\partial_{\lambda_{\ell,i}} \log\pi_{\lambda_\ell,\ell}(v),
\end{aligned}\]
where $\gamma_{n,\ell}$ is precisely the channel posterior \eqref{eq:token_post}. The bracketed factor is a probability distribution on $\V$ by construction, so the sum is an expectation under $\gamma_{n,\ell}$. Substituting Lemma~\ref{lem:pool_score},
\[\begin{aligned}
\frac{\partial \mathcal J_{n,\ell}}{\partial\lambda_{\ell,i}}
&=\E_{v\sim\gamma_{n,\ell}}\Big[a_i(v)-\E_{u\sim\pi_{\lambda_\ell,\ell}}[a_i(u)]\Big]\\
&=\E_{v\sim\gamma_{n,\ell}}\big[a_i(v)\big]-\E_{u\sim\pi_{\lambda_\ell,\ell}}\big[a_i(u)\big],
\end{aligned}\]
the last step because the inner expectation does not depend on $v$. Averaging over $n=1,\dots,N$ gives \eqref{eq:gradient}.
\end{proof}

\begin{remark}[Reading the two terms]
The gradient compares the same statistic $a_i$ under the channel posterior $\gamma_{n,\ell}$ and the current composed prior $\pi_{\lambda_\ell,\ell}$. An expert gains relative weight when it scores observation-compatible tokens above tokens from the current composition. This centering removes expert-specific additive offsets but does not remove relative scale or general calibration differences.
\end{remark}

\begin{proposition}[Degeneracy of the factorized objective under masking]
\label{prop:mask_degeneracy}
Suppose position $\ell$ is masked in observation $n$, so that the channel likelihood is fixed, $c_{n,\ell}(v)=r$ for all $v\in\V$. Then observation $n$ contributes nothing to the gradient \eqref{eq:gradient} at position $\ell$.
\end{proposition}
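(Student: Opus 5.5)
The plan is to read off the contribution of observation $n$ from Proposition~\ref{prop:gradient} and show it vanishes identically. In the proof of that proposition, the $n$-th summand of \eqref{eq:gradient} is
\[
\partial_{\lambda_{\ell,i}}\mathcal J_{n,\ell}=\E_{v\sim\gamma_{n,\ell}}[a_i(v)]-\E_{u\sim\pi_{\lambda_\ell,\ell}}[a_i(u)].
\]
It therefore suffices to show that the channel posterior $\gamma_{n,\ell}$ of \eqref{eq:token_post} coincides with the composed prior $\pi_{\lambda_\ell,\ell}$ whenever $c_{n,\ell}$ is constant.

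\emph{Step 1: the posterior equals the prior.} Substitute $c_{n,\ell}(v)=r$ into \eqref{eq:token_post}. The numerator becomes $r\,\pi_{\lambda_\ell,\ell}(v)$. The denominator becomes $r\sum_u\pi_{\lambda_\ell,\ell}(u)=r$, using that the pool is normalized. Since $r>0$ (a masked position occurs with positive probability, so the case is nonvacuous), the factor cancels and $\gamma_{n,\ell}=\pi_{\lambda_\ell,\ell}$.

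\emph{Step 2: the two expectations cancel.} With $\gamma_{n,\ell}=\pi_{\lambda_\ell,\ell}$, the two expectations of $a_i$ are taken under the same law, so their difference is zero for every $i$.

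For a second, independent check, note that $E_{n,\ell}(\lambda_\ell)=\sum_v r\,\pi_{\lambda_\ell,\ell}(v)=r$ is constant in $\lambda_\ell$. Hence $\mathcal J_{n,\ell}=\log r$ has zero derivative, which is equivalent to the zero-mean score identity noted after Lemma~\ref{lem:pool_score}.

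No step here is a real obstacle. The only points needing care are that $r>0$, so the normalization can be divided out, and that $a_i$ is finite under the full-support assumption, so the expectations exist. Finally, the conclusion concerns only the $n$-th summand: the total gradient at $\ell$ becomes the average of the remaining unmasked observations, scaled by $1/N$.
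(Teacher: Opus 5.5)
Your proof is correct and follows the paper's argument essentially verbatim: substituting the constant likelihood gives $E_{n,\ell}\equiv r$ and $\gamma_{n,\ell}=\pi_{\lambda_\ell,\ell}$, so the two expectations in \eqref{eq:gradient} are taken under the same law and cancel. Your remark that $r>0$ is needed to divide out the normalization is a small point the paper leaves implicit.
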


\begin{proof}
If $c_{n,\ell}\equiv r$ then $E_{n,\ell}(\lambda_\ell)=r\sum_v \pi_{\lambda_\ell,\ell}(v)=r$, which is constant in $\lambda_\ell$, and the channel posterior reduces to the prior, $\gamma_{n,\ell}(v)=r\,\pi_{\lambda_\ell,\ell}(v)/r=\pi_{\lambda_\ell,\ell}(v)$. The two expectations in \eqref{eq:gradient} are then taken under the same distribution and cancel.
\end{proof}

This formalizes the masking difficulty noted in Section~\mpSecMethod: in the factorized model, a masked observation contributes no local information about its position-specific weight. Another observation in which the position is visible may contribute information. For contextual experts, $\lambda_\ell$ can affect the marginal distribution of visible tokens through cross-position dependence, so the sequence-level evidence can remain informative. The practical estimator also scores imputed posterior particles and smooths weights across positions.

\subsection{Contextual $2\times2$ validation}
\label{app:ctx2x2}

To examine the energy and sampling approximations separately, we use a contextual model small enough to enumerate. It is a normalized bigram-energy prior with two experts whose local energies depend on both $x_{\ell-1}$ and $x_\ell$; unlike a factorized model, it therefore exposes a genuine difference between the surrogate and exact energies. The short sequence and small vocabulary allow us to compute $Q_\lambda$, its posterior, and its gradient exactly, as well as a mean-field law that samples positions independently from the composed local conditionals. Table~\ref{tab:ctx2x2} crosses exact or surrogate energies with exact or mean-field expectations.

The exact sequence-level gradient agrees with a finite-difference gradient at cosine $1.000$, a numerical implementation check for Proposition~\mpPropSeqGradient{} (Table~\ref{tab:ctx2x2}). Replacing the exact energies with the denoising surrogate, or the exact expectations with mean-field sampling, perturbs the gradient direction, the energy surrogate more (cosine $0.65$) than the sampler ($0.92$); the deployed combination of both reaches cosine $0.62$. This enumerable model isolates the two approximations but does not by itself establish tolerance to them at scale. The reported spreads are standard deviations across the $20$ seeds; the two surrogate-energy configurations vary widely, so their means should be read as averages over seeds whose alignment ranges from weak to near exact.

\paragraph{Field recovery in the contextual model.}
The comparison above is taken at the uniform initialization and measures agreement of the gradient, not whether the objective identifies the field. We therefore run the recovery update to convergence in the same enumerable model, with the exact gradient, varying the number of corrupted observations $N$ that share one field. The true field routes the first half of the positions to one expert and the second half to the other, softened off the vertices, so a uniform field sits at mean absolute error $0.455$. Recovery improves monotonically with $N$ and closes about three quarters of that distance (Table~\ref{tab:ctxrecover}). A single observation of length four carries too little information to separate the four configurations of Table~\ref{tab:ctx2x2}, which is why we report gradient agreement rather than field error there. This is the contextual counterpart of the factorized recovery result of Section~\mpSecToy{} and of its dependence on the number of observations.

\begin{table}[t]
\centering
\small
\caption{Field recovery in the enumerable contextual model}
\label{tab:ctxrecover}
\begin{tabular}{@{}lcccccc@{}}
\toprule
Observations $N$ & $1$ & $4$ & $16$ & $64$ & $256$ & $1024$ \\
\midrule
Field MAE $\downarrow$ & $.362$ & $.316$ & $.181$ & $.136$ & $.120$ & $.099$ \\
\bottomrule
\end{tabular}
\end{table}

\begingroup
\setlength{\tabcolsep}{3pt}
\begin{table}[t]
\centering
\small
\caption{Enumerable diagnostic on a contextual bigram-energy model (4 tokens, length 4, 256 sequences, 20 seeds): an implementation check, not evidence that the large-scale approximation is controlled. The mean-field law differs from the exact prior and posterior at total variation $0.38$ and $0.34$.}
\label{tab:ctx2x2}
\begin{adjustbox}{max width=\linewidth}
\begin{tabular}{lcc}
\toprule
Configuration & Gradient cosine $\uparrow$ & Gradient RMSE $\downarrow$ \\
\midrule
(1) exact energy, exact sampler & $1.000$ & $0.000$ \\
(2) surrogate energy, exact sampler & $0.646\pm0.359$ & $0.259\pm0.161$ \\
(3) exact energy, approx.\ sampler & $0.923\pm0.090$ & $0.129\pm0.071$ \\
(4) surrogate energy, approx.\ sampler (deployed) & $0.618\pm0.352$ & $0.304\pm0.170$ \\
\bottomrule
\end{tabular}
\end{adjustbox}
\end{table}
\endgroup

\section{Experimental details}
\label{app:details}

This section contains the data construction, expert configurations, optimization settings, decoders, and statistical procedures used in the main experiments. The implementation, the evaluation-set construction scripts and the analysis code are at \url{https://github.com/panahazari/evidence-aligned-local-composition}. No corpus is redistributed there; the evaluation sets are rebuilt from the public sources listed below.

\subsection{Data and corruption}
\label{app:data}

\paragraph{Exact simulator (Section~\mpSecToy).}
Vocabulary $|\V|=12$, length $L=48$, $m=2$ experts. Each expert is a position-factorized categorical model with per-position logits drawn so that the two experts are separated by a logit-gap parameter, whose default value $2.5$ corresponds to a measured Kullback-Leibler divergence of $4.56$ nats between the experts. The separation parameter is not itself a divergence; figures report the measured KL. The ground-truth field routes positions $0$ to $15$ to the first expert, positions $16$ to $31$ to the second, and fuses both on positions $32$ to $47$. Clean sequences are drawn from the composed prior and corrupted by the replacement channel at rate $0.4$. We use $3000$ corrupted sequences for recovery.

\paragraph{Byte-level experts (Section~\mpSecByte).}
We use three corpora, each truncated to $5$M characters: prose from WikiText-103 (raw) \citep{Merity2017}, Python code from CodeSearchNet \citep{Husain2019} with comments and docstrings removed by a tokenizer-based filter, and Kubernetes YAML \citep{TheStackYamlK8s} with full-line comments removed. These filters reduce trivial overlap between domains. Evaluation windows concatenate segments from the evaluation split of each corpus without artificial fences or boundary markers, so region labels are known by construction. The three-expert study uses $256$ windows of $L=256$ bytes, each containing at least two regimes with a minimum length for each. The two-expert study uses $128$ prose and code windows.

\paragraph{Pretrained experts (Section~\mpSecScale).}
For the $1.3$B study, evaluation windows of $L=512$ tokens are built the same way from WikiText-103 prose \citep{Merity2017} and CodeParrot Python code \citep{CodeParrotClean} held apart from the fine-tuning data, giving $256$ windows. For the $7$B study we use $64$ windows in the shared tokenizer of the two models.

\paragraph{Real documents (Section~\mpSecNatural).}
The natural evaluation contains $256$ windows from $75$ developer-written README documents drawn from the most-starred GitHub repositories \citep{GithubMarkdown}. We tokenize each document as a whole and retain its original prose and code regions and boundaries. Because a source document may contribute several windows, every statistic in this study resamples documents rather than individual windows.

\paragraph{Scientific documents (Section~\mpSecNatural).}
The distinct-expert natural setting uses arXiv \LaTeX{} sources (the proof-pile corpus, \citealp{ProofPile}). We split each document into a prose stream and a mathematics stream by its math environments (inline \$\dots\$, display \$\$\dots\$\$ and \texttt{\textbackslash[\dots\textbackslash]}, and \texttt{equation}/\texttt{align}-type environments) and fine-tune one expert on each stream, holding documents out for evaluation. Each held-out document is tokenized whole with the GPT-2 tokenizer, and each token is labeled prose or mathematics by whether its characters fall inside a math environment; a token straddling a boundary takes the majority label and is excluded from region-conditional metrics. We keep windows in which both regions clear a $0.25$ share, giving $256$ windows from $71$ documents at mask rate $0.3$, with a $0.50$ mathematics share and a token-level region-switch rate about seven times higher than the README documents.

\paragraph{Channels.}
The mask channel replaces each token independently with probability $r$ by $\mathtt{[MASK]}$. The default is $r=0.3$ for the pretrained studies and $r=0.2$ for the byte studies, and Appendix~\ref{app:rate} sweeps it. The replacement channel replaces each token independently with probability $r$ by a uniform draw from the vocabulary; we use it in the exact simulator, where the posterior decoder is exact.

\subsection{Experts}
\label{app:experts}

\paragraph{Byte-level experts.}
Each expert is a discrete flow-matching model over the byte vocabulary, $|\V|=256$ plus a mask symbol, so $257$ classes. All experts in a composition here share an identical architecture, which is convenient but not required (see the expert interface in Section~\mpSecSetup); what the logit composition needs is a shared vocabulary and output interpretation, not a shared network. The architecture is a transformer with $8$ layers, model width $384$, $12$ attention heads, dropout $0.1$, and context $L=256$, about $14$M parameters. Training uses the mask-source path with a polynomial schedule ($\kappa$ exponent $1.0$, $t_\epsilon=10^{-4}$), batch size $48$, $20{,}000$ steps, AdamW with learning rate $3\times10^{-4}$, weight decay $0.01$, and $400$ warmup steps. Each expert sees only its own domain. Final training cross-entropies are $2.40$ (prose), $1.97$ (code) and $1.83$ (configuration) nats per byte.

\paragraph{Fine-tuned $1.3$B experts.}
Both experts are fine-tuned from the same $1.3$B discrete flow-matching model with a GPT-2 vocabulary ($50{,}257$ tokens). We use LoRA with rank $16$ and $\alpha=32$, learning rate $10^{-4}$, batch size $16$, $6000$ steps, mixed precision, and gradient checkpointing on $12{,}000$ blocks of $512$ tokens per domain. Sharing the base model and fine-tuning recipe reduces architectural confounding and makes the domain data the intended source of specialization. Section~\mpSecNatural{} also uses a pair fine-tuned on matched corpora and the unmodified base model as a generalist.

\paragraph{arXiv prose and mathematics experts.}
The scientific-document experts of Section~\mpSecNatural{} are fine-tuned from the same $1.3$B model with the identical LoRA recipe ($6000$ steps), one on the extracted prose stream and one on the mathematics stream. On the arXiv evaluation regions they pass the specialization check with balanced margins of $0.47$ (prose) and $0.46$ (mathematics) nats, the strongest two-sided separation among the natural-document pairs. The README-matched pair, by contrast, separates only weakly on its evaluation text, which is why local composition beats a global weight in the scientific setting and ties in the README setting.

\paragraph{Off-the-shelf $7$B experts.}
We use \texttt{Dream-org/Dream-v0-Base-7B} as the general expert and \texttt{Dream-org/Dream-Coder-v0-Base-7B} as the code expert at mask rate $0.6$. These public masked-diffusion language models share a tokenizer and compatible source and output conventions. Strictly speaking, they are trained as masked diffusion models rather than under the discrete flow-matching parameterization used elsewhere in the paper. The distinction does not affect the method. Both frameworks expose the same two operations required by the expert interface of Section~\mpSecSetup: clean-token logits given a partially masked context and a denoising loss on a candidate sequence. Moreover, the denoising-loss identity of Section~\mpSecMethod{} is stated for the mask-source path and its conclusion does not depend on the schedule, so it covers mask-source masked diffusion and mask-source discrete flow matching alike (Appendix~\ref{app:source}). We therefore refer to all experts as discrete flow-matching models in the main text and note the training-objective difference only here. We do not train or adapt them, and each runs on a separate GPU. The pair fails the specialization diagnostic in Appendix~\ref{app:specialization}. The composed fill is deterministic, so the eight nominal particles are identical and the estimator is a plug-in rather than a particle average.

\subsection{The specialization check}
\label{app:specialization}

The inferred field depends on per-expert scale, so we use a specialization diagnostic before composition. We evaluate every expert's denoising energy on every domain and accept the diagnostic when each expert assigns its lowest energy to its own domain. For the three byte-level experts of Section~\mpSecByte, the energies in nats per byte are

{\centering
\begin{tabular}{lccc}
\toprule
& \multicolumn{3}{c}{Energy under expert} \\
\cmidrule(lr){2-4}
Data domain & Prose & Code & Configuration \\
\midrule
Prose         & $\mathbf{1.27}$ & $2.46$ & $2.30$ \\
Code          & $1.55$ & $\mathbf{1.02}$ & $1.42$ \\
Configuration & $1.55$ & $1.32$ & $\mathbf{0.81}$ \\
\bottomrule
\end{tabular}\par}

\noindent with diagonal margins of $1.03$, $0.40$ and $0.51$ nats. The byte-level experts and both $1.3$B pairs pass this check.

\paragraph{The $7$B pair fails the diagnostic.} For the two off-the-shelf models, the measured energies are $0.888$ nats per token for Dream on prose and $0.514$ on code, compared with $1.078$ for Dream-Coder on prose and $0.337$ on code. Dream-Coder assigns lower energy to code by $0.74$ nats, but Dream also assigns lower energy to code by $0.37$ nats. The matrix is therefore not diagonally dominant and the diagnostic fails.

Column-wise domain discrimination still holds: Dream has lower energy than Dream-Coder on prose, while Dream-Coder has lower energy than Dream on code. Row-wise own-domain preference fails because Dream assigns lower energy to code than to prose. The field nevertheless reaches field accuracy $0.95$. We therefore report the experiment as a stress test outside the diagnostic's accepted regime.

Appendix~\ref{app:seeds} shows the complementary case: a byte-level pair that fails to specialize also fails to produce a usable field, which is what the check is for.

\subsection{Composition hyperparameters}
\label{app:hparams}

Hyperparameters differ across experiments. Table~\ref{tab:settings} reports the settings used in each run. The byte-level studies use $R=30$, $48$ particles, $K=4$, $\eta=1$, and $\tau=0.6$. The $1.3$B studies use $R=12$, $24$ particles, $K=2$, $\eta=0.5$, and $\tau=0.4$ under the proximal map, or $\tau=0.3$ under the moving average on naturally mixed documents; the $7$B study uses $R=6$, eight nominal particles, and $\tau=0.2$. The field is initialized uniformly on the simplex and the seed is $0$.

Only two settings were searched. The smoothing strength $\tau$ was swept over $\{0,0.2,0.6\}$ at byte scale, over $\{0,0.1,0.2,0.4,0.6,1,2\}$ at $1.3$B and $\{0,0.2,0.6,1\}$ at $7$B under the proximal map (Appendix~\ref{app:tv}), and the corruption rate over $\{0.1,0.2,0.3\}$ at byte scale and $\{0.3,0.5,0.7\}$ at $1.3$B (Appendix~\ref{app:rate}); the rate grid is reported as an ablation rather than used to pick a winner. The deployed $\tau$ at byte scale was chosen once from the byte sweep, which was run on the same windows we report. Read literally, that means the byte $\tau$ was selected on the reported evaluation data, with no calibration subset kept apart, and the byte numbers should be read with that in mind. The $1.3$B and $7$B protocol below does not have this problem. At $1.3$B and $7$B it was chosen by a rule fixed before the sweep was run, namely the highest restoration accuracy on a small calibration set held apart from the reported windows, with field accuracy as the tie-break. At $1.3$B this selects $\tau=0.4$ from a plateau that is flat to $0.0003$ between $0.2$ and $0.6$. At $7$B the rule as written selects $\tau=0$, by $0.0007$ over $16$ windows, but that setting leaves field accuracy at $0.745$ against $0.967$ at $\tau=0.2$; we treat the restoration difference as a tie and deploy $\tau=0.2$, which is a deviation from the rule and is the only such deviation in the paper. The remaining settings ($R$, $P$, $K$, $\eta$, sampler steps) were fixed in advance by the compute budget and were not tuned: one value was used per scale throughout, as listed in Table~\ref{tab:settings}. No held-out selection was performed for any hyperparameter.

The smoothing operator is experiment-specific. The reference total-variation operator is the simplex-constrained proximal map
\begin{equation}
  \begin{aligned}
    \lambda^{(s+1)}=\argmin_{\lambda_{1:L}\in\simplex_m^L}\Big\{&\tfrac12\textstyle\sum_{\ell}\|\lambda_\ell-\tilde\lambda^{(s+1)}_\ell\|_2^2\\
    &+\tau \textstyle\sum_{\ell\ge2}\|\lambda_\ell-\lambda_{\ell-1}\|_1\Big\} .
  \end{aligned}
  \label{eq:tv}
\end{equation}
For the two-expert byte-level study, we apply the exact one-dimensional fused-lasso map to the first weight, clip it to $[0,1]$, and set the second weight to its complement; this implements \eqref{eq:tv}. For the three-expert study, we apply the same map independently to each weight column, clip negative values, and renormalize each row to the simplex. The constructed $1.3$B and $7$B studies use the same two-expert fused-lasso map as the byte-level study, so \eqref{eq:tv} is the operator behind every constructed result in the paper. For the studies on naturally mixed documents, we instead smooth each weight column with a moving average, using a centered rectangular kernel of width five whose five taps are all fixed to $1/5$, with zero padding at the two ends. The new field is $(1-\tau)$ times the current field plus $\tau$ times its moving average, and each row is then renormalized to the simplex. Thus, $\tau$ is a total-variation penalty in the constructed studies and the weight given to the moving average in the naturally mixed document studies. The two operators do not share one proximal objective, and we make no common convergence claim; within each study the same operator is applied to the evidence field and to every baseline field it is compared against.

\begingroup
\setlength{\tabcolsep}{3pt}
\begin{table}[t]
\centering
\small
\setlength{\tabcolsep}{4pt}
\caption{Settings used by each experiment; they are \emph{not} shared across scales. $R$: recovery iterations; $P$: particles; $K$: time samples; $\eta$: step size; $\tau$: smoothing strength.}
\label{tab:settings}
\begin{adjustbox}{max width=\linewidth}
\begin{tabular}{lccccccccc}
\toprule
Experiment & Rate & $L$ & $R$ & $P$ & $K$ & $\eta$ & $\tau$ & Windows & Docs \\
\midrule
Byte, 2 experts & $0.2$ & $256$ & $30$ & $48$ & $4$ & $1.0$ & $0.6$ & $128$ & 64 \\
Byte, 3 experts & $0.2$ & $256$ & $30$ & $48$ & $4$ & $1.0$ & $0.6$ & $256$ & 160 \\
Fine-tuned 1.3B & $0.3$ & $512$ & $12$ & $24$ & $2$ & $0.5$ & $0.4$ & $256$ & n/a \\
Off-the-shelf 7B & $0.6$ & $512$ & $6$ & $8^{\dagger}$ & $2$ & $0.5$ & $0.2$ & $64$ & 64 \\
Natural, README & $0.3$ & $512$ & $12$ & $24$ & $2$ & $0.5$ & $0.3$ & $256$ & 75 \\
Natural, arXiv & $0.3$ & $512$ & $12$ & $24$ & $2$ & $0.5$ & $0.3$ & $256$ & 71 \\
\bottomrule
\end{tabular}
\end{adjustbox}
\par\vspace{2pt}
{\footnotesize\raggedright
$^{\dagger}$The $7$B composed fill is deterministic, so its $8$ particles are identical; that estimator is a plug-in, not a particle average. The first four rows use the proximal map of \eqref{eq:tv}, where $\tau$ is a penalty; the two natural-document rows use the moving-average blend, where $\tau$ is a mixing weight. The two are not comparable.\par}
\end{table}
\endgroup

\subsection{Decoders}
\label{app:decoders}

The default decoder is the one-step reconstruction \mpEqDecode{} at $t_{\mathrm{dec}}=0.9$: visible tokens are copied and each masked token is filled independently by the argmax of the composed logits. The iterative decoder revisits positions over several denoising times, using $12$ steps in the byte studies and $6$ rounds at $1.3$B. All methods use the same decoder and matched source initialization. This controls decoding variation but does not make the decoder cancel, because decoding is nonlinear in the composition field. Appendix~\ref{app:decoder} shows that the method ordering is unchanged.

\subsection{Protocol and statistics}
\label{app:stats}

Every method uses the same experts, corrupted input, channel, decoder, and matched source initialization; only the composition weights differ. Comparisons are paired over the declared statistical unit. Byte-level and natural-document analyses aggregate and resample by source document. The $1.3$B analysis uses constructed windows and takes the window as the statistical unit, as noted in Section~\mpSecScale. We report the mean paired difference from local evidence, its $95\%$ bootstrap confidence interval, a two-sided sign-flip permutation $p$-value, and Cohen's $d_z$, each with $10{,}000$ resamples and a fixed seed. Restoration accuracy and field accuracy are as defined in Section~\ref{sec:experiments}.

\subsection{Compute}
\label{app:compute}

All experiments run on NVIDIA A100 40GB GPUs. Training one byte-level expert takes under an hour on a single GPU. Fine-tuning one $1.3$B expert takes a few hours on a single GPU. Composition is the dominant cost at inference: with the settings above, the three-expert byte-level run takes about $65$ seconds per document, dominated by the particle draws and the per-expert energy evaluations, and the full $256$-window run takes about $4.6$ GPU-hours. The $7$B study places one expert per GPU. Costs scale linearly in the number of experts, since each expert is scored independently at each iteration.

\section{Additional results}
\label{app:extra}

The following experiments test sensitivity to corruption, expert training, decoder choice, and domain granularity, and provide more detailed measurements of the recovered field.

\subsection{Corruption rate and seeds}
\label{app:rate}

Table~\ref{tab:rate} reports a corruption-rate sweep. The byte-level study averages three corruption seeds; the spread is below $0.007$ accuracy for every method. Its method ordering is unchanged across rates, while the advantage over equal weights narrows from $+0.101$ at $r=0.1$ to $+0.078$ at $r=0.3$. The $1.3$B sweep also preserves the ordering through $r=0.7$, with a margin near $0.05$ over equal weights. However, the $1.3$B rates use different evaluation subsets and sample sizes, so this sweep provides descriptive evidence of stable ordering rather than a controlled estimate of the effect of corruption rate.

\begin{figure}[t]
\centering
\includegraphics[width=\linewidth]{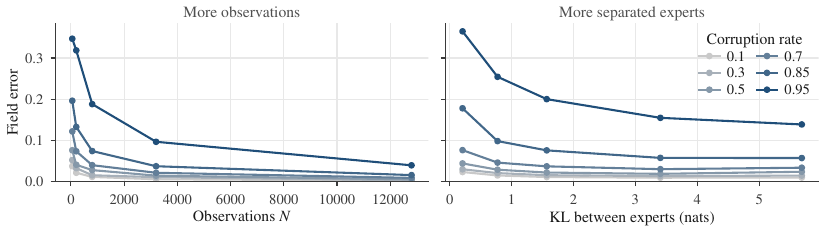}
\caption{Recoverability of the field in the categorical simulator: error falls with more observations and greater expert separation, and rises with corruption.}
\label{fig:phase}
\end{figure}

\begin{table}[t]
\centering
\caption{Corruption-rate sweep. Byte numbers average three corruption seeds ($128$ windows each, two experts); $1.3$B uses $256$ windows at $r=0.3$ and $128$ at the higher rates.}
\label{tab:rate}
\begin{adjustbox}{max width=\linewidth}
\begin{tabular}{llccccc}
\toprule
Experts & Rate & Local (ours) & Hard label router & Best single & Global & Equal \\
\midrule
Byte-level  & $r=0.1$ & $\mathbf{0.586}$ & $0.563$ & $0.553$ & $0.523$ & $0.485$ \\
            & $r=0.2$ & $\mathbf{0.533}$ & $0.515$ & $0.505$ & $0.476$ & $0.442$ \\
            & $r=0.3$ & $\mathbf{0.479}$ & $0.463$ & $0.456$ & $0.430$ & $0.401$ \\
\midrule
$1.3$B      & $r=0.3$ & $0.653$ & $\mathbf{0.655}$ & $0.599$ & $0.612$ & $0.599$ \\
            & $r=0.5$ & $0.527$ & $\mathbf{0.534}$ & $0.475$ & $0.489$ & $0.471$ \\
            & $r=0.7$ & $0.369$ & $\mathbf{0.381}$ & $0.332$ & $0.344$ & $0.322$ \\
\bottomrule
\end{tabular}
\end{adjustbox}
\end{table}

\subsection{Independently trained expert pairs}
\label{app:seeds}

We retrained the byte-level expert pair twice with different seeds and repeated the full evaluation (Table~\ref{tab:seeds}). The first pair reproduces the main result. For the second, field accuracy is $0.51$, near chance for two experts, and local evidence ($0.449$) is close to the best single expert ($0.447$). This failed composition coincides with a failed specialization diagnostic. The result supports treating expert specialization as an empirical prerequisite, although it does not isolate specialization as the sole cause. Two retrainings are also too few to characterize seed stability, and one of the two failing is better read as a warning than as a rate.

The second pair also puts the hard label router ($0.326$) well below equal weights ($0.410$). This is what the diagnostic predicts rather than an anomaly: the router commits each position to the expert trained on that region's domain, so it is only privileged when the label tracks competence. When the pair does not specialize, the label no longer identifies the better expert at that position, and hard routing discards the averaging that equal weights still enjoys. The same mechanism makes the router an imperfect reference in general, which is why we describe it as privileged but not an upper bound.

\begin{table}[t]
\centering
\caption{Two independently retrained byte-level expert pairs. The second pair does not specialize and loses the advantage over the best single expert. \emph{Read each row against its own baselines, not across rows.} Both pairs use the same $128$ documents, decoder and protocol, but they are different models with different absolute skill: Pair 2's code expert alone reaches $0.447$ where Pair 1's reaches $0.339$, so Pair 2's higher local-evidence number reflects stronger experts, not better composition. The within-pair margins are the comparable quantity.}
\label{tab:seeds}
\small
\setlength{\tabcolsep}{3.5pt}
\begin{adjustbox}{max width=\linewidth}
\begin{tabular}{@{}lcccccc@{}}
\toprule
Expert pair & Field acc. & Local (ours) & Hard label router & Best single & Global & Equal \\
\midrule
Pair 1 (specialized)  & $0.92$ & $\mathbf{0.362}$ & $0.361$ & $0.339$ & $0.306$ & $0.290$ \\
Pair 2 (not specialized) & $0.51$ & $0.449$ & $0.326$ & $0.447$ & $0.446$ & $0.410$ \\
\bottomrule
\end{tabular}
\end{adjustbox}
\end{table}

\subsection{A second real domain}
\label{app:python}

We also evaluate a finer-grained domain pair: Python code versus its comments and docstrings. These regimes interleave at the line level within a file rather than appearing as long blocks. Across $128$ windows, local evidence reaches $0.227$. We detect no difference from the hard label router ($0.227$), while local evidence exceeds the code expert ($0.222$), the global weight ($0.212$), and equal weights ($0.188$). Absolute accuracy and margins are lower than in the main study, consistent with the greater vocabulary overlap between code and comments.

\subsection{Decoder choice}
\label{app:decoder}

Table~\ref{tab:decoder} repeats the two main studies with an iterative decoder that revisits positions across several denoising times. The saved iterative outputs support only window-weighted aggregation, so this is a secondary descriptive analysis rather than the primary document-weighted byte analysis. The method ordering is unchanged, although individual accuracies can increase or decrease; at $1.3$B the iterative decoder brings local evidence level with the privileged label router, which the one-step decoder leaves $0.002$ ahead. All methods use the same decoder and matched source initialization, which controls decoding variation but does not remove decoder effects because decoding is nonlinear in the field.

\begin{table}[t]
\centering
\caption{One-step versus iterative decoding (window-weighted). The method ordering is identical under both decoders; the best single expert is the own-posterior selector (the code expert at $1.3$B).}
\label{tab:decoder}
\begin{adjustbox}{max width=\linewidth}
\begin{tabular}{llcccc}
\toprule
Experts & Decoder & Local (ours) & Hard label router & Best single & Equal \\
\midrule
Byte-level, 3 experts & one-step  & $\mathbf{0.658}$ & $0.657$ & $0.612$ & $0.563$ \\
                      & iterative & $\mathbf{0.662}$ & $0.660$ & $0.618$ & $0.556$ \\
\midrule
$1.3$B, 2 experts     & one-step  & $0.653$ & $\mathbf{0.655}$ & $0.599$ & $0.599$ \\
                      & iterative & $\mathbf{0.656}$ & $\mathbf{0.656}$ & $0.602$ & $0.602$ \\
\bottomrule
\end{tabular}
\end{adjustbox}
\end{table}

\subsection{The smoothing prior}
\label{app:tv}

We isolate the role of the smoothing prior (Table~\ref{tab:tv}, Figure~\ref{fig:tv}). Its effect differs markedly across the two ablations. At byte scale, removing smoothing lowers local-evidence accuracy to $0.348$, below the best single expert at $0.413$; setting $\tau=0.2$ raises accuracy to $0.430$, above the reported single-expert and global baselines. On the $32$-window calibration subset of the $1.3$B study, by contrast, the unsmoothed field already reaches $0.659$, compared with $0.636$ for the field-average global reference. Raising $\tau$ to $0.4$ lifts restoration accuracy to $0.679$ and field accuracy from $0.912$ to $0.993$. The response is a plateau rather than a peak: accuracy varies by $0.0003$ across $\tau\in\{0.2,0.4,0.6\}$ and decays slowly outside it, to $0.676$ at $\tau=2$, so the deployed value is not a sharp choice. This contrast with the byte scale is consistent with reduced reliance on smoothing when the expert signal is stronger, but two ablations are not sufficient to establish a general scaling law.

\begin{table}[t]
\centering
\caption{Effect of smoothing strength $\tau$. Byte numbers use this ablation's decoder (not comparable with Table~\mpTabByte); the $1.3$B numbers use one $32$-window calibration subset, disjoint from the reported windows. Both scales use the proximal update \eqref{eq:tv}, so $\tau$ means the same thing in every column. Note that the deployed byte setting is $\tau=0.6$, which this sweep does not cover: the byte columns bracket it at $0$ and $0.2$ but do not contain it, so the deployed configuration does not appear in its own ablation.}
\label{tab:tv}
\small
\setlength{\tabcolsep}{4pt}
\begin{adjustbox}{max width=\linewidth}
\begin{tabular}{@{}lcc@{\hspace{8pt}}ccc@{}}
\toprule
& \multicolumn{2}{c}{Byte-level experts} & \multicolumn{3}{c}{Fine-tuned $1.3$B experts} \\
\cmidrule(lr){2-3}\cmidrule(lr){4-6}
& $\tau=0$ & $\tau=0.2$ & $\tau=0$ & $\tau=0.4$ & $\tau=2$ \\
\midrule
Local evidence (accuracy)   & $0.348$ & $\mathbf{0.430}$ & $0.659$ & $\mathbf{0.679}$ & $0.676$ \\
Best single expert          & $0.413$ & $0.413$          & $0.623$ & $0.623$ & $0.623$ \\
Global weight               & $0.337$ & $0.357$          & $0.636$ & $0.636$ & $0.635$ \\
Local evidence (field acc.) & $\cdot$ & $\cdot$        & $0.912$ & $0.993$ & $0.993$ \\
\bottomrule
\end{tabular}
\end{adjustbox}
\end{table}

\begin{figure}[t]
\centering
\includegraphics[width=\linewidth]{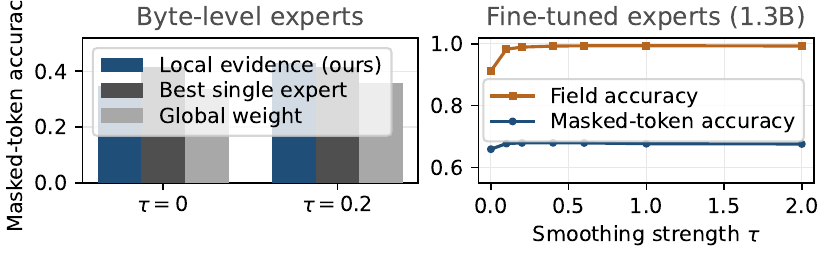}
\caption{Effect of smoothing. Without it, byte-level local evidence falls below the best single expert; at $1.3$B smoothing mainly improves field accuracy (Table~\ref{tab:tv}).}
\label{fig:tv}
\end{figure}

\subsection{Quality of the recovered field}
\label{app:field}

Restoration accuracy does not measure the quality of the inferred composition, and overall field accuracy can hide boundary errors because region interiors contain many more positions than transitions. We therefore evaluate the three-expert byte-level field against the true regions using several metrics, with clustering by source document as in the main byte-level analysis (Table~\ref{tab:fieldmetrics}). Macro one-versus-rest AUROC is $0.993$ (prose $0.985$, code $0.996$, configuration $0.996$), and class-wise argmax recall is $0.924$ for prose, $0.995$ for code, and $0.998$ for configuration.

The field is highly accurate away from transitions: interior accuracy is $0.987$, macro F1 is $0.972$, IoU is $0.948$, and segment F1 is $0.979$. Performance is lower near transitions. Accuracy within four positions of a true change is $0.687$, and one-to-one boundary F1 at the same tolerance is $0.720$. Thus, the field identifies region interiors more reliably than exact transition locations, consistent with the qualitative examples in Figure~\mpFigFlagship. This suggests boundary-aware smoothing or decoding as a useful direction.

The global and equal-weight references are constant fields and therefore contain no predicted boundaries. Their boundary F1 is zero, and their field accuracies ($0.669$ and $0.334$) reflect the prevalence of the region favored by each constant field. These rows provide context but cannot test boundary recovery.

\begin{table}[t]
\centering
\caption{Attribution metrics for the recovered field, three byte-level experts, $256$ windows from $160$ documents. Boundaries are within four positions of a regime change; segment F1 uses same-label IoU $\ge 0.5$. The constant references predict no boundaries.}
\label{tab:fieldmetrics}
\small
\setlength{\tabcolsep}{3pt}
\begin{adjustbox}{max width=\linewidth}
\begin{tabular}{@{}lccccccc@{}}
\toprule
Field & Acc. & Masked & Boundary & Interior & Macro F1 & Segment F1 & Boundary F1 \\
\midrule
\textbf{Local evidence (ours)} & $\mathbf{0.976}$ & $0.972$ & $0.687$ & $0.987$ & $0.972$ & $0.979$ & $0.720$ \\
Field-average global           & $0.669$ & $0.662$ & $0.502$ & $0.675$ & $0.399$ & $0.645$ & $0.000$ \\
Equal weights                  & $0.334$ & $0.340$ & $0.498$ & $0.327$ & $0.246$ & $0.008$ & $0.000$ \\
\bottomrule
\end{tabular}
\end{adjustbox}
\end{table}

\subsection{Off-the-shelf $7$B stress test}
\label{app:sevenb}

We include one experiment with models that we did not train, treating it as an out-of-assumption stress test rather than evidence for the method. The pair consists of Dream-v0-Base-7B as the general expert and Dream-Coder-v0-Base-7B as the code expert, evaluated at mask rate $0.6$ (Appendix~\ref{app:experts}). Three properties limit the study. First, the pair fails our specialization diagnostic: the general expert scores code $0.37$ nats better than prose (Appendix~\ref{app:specialization}), so using the uncalibrated choice $\alpha_i=1$ is not well supported. Second, the composed fill is deterministic, making the eight nominal particles identical and reducing the estimator to a single plug-in evaluation. Third, the study contains only $64$ windows.

Under these conditions, local evidence reaches $0.605$ and is not detectably different from the hard label router ($0.606$, $p=0.90$). It exceeds the general model alone by $0.024$, the best single expert by $0.011$, equal weights by $0.006$, and the field-average global reference by $0.006$, each at $p<0.01$ (Table~\ref{tab:sevenb}). The adapted baselines fit the same picture: the DEMix router is indistinguishable from local evidence ($0.605$, $p=0.83$), while PackLLM-opt ($0.597$), PackLLM-sim ($0.593$), and the marginal-evidence selector ($0.592$) trail slightly, consistent with a pair whose specialization is too weak for any weighting scheme to separate strongly. The margin over a shuffled field is the thinnest in the paper ($+0.004$, $p=0.04$), so on this pair the positional arrangement of the field carries little of the gain. The field agrees with the regions at $0.954$. These numbers apply only to this model pair and do not establish robustness to arbitrary off-the-shelf experts; they show that, for two closely related but imperfectly specialized models, the inferred field produces a small gain without reducing restoration accuracy.

\begin{table}[t]
\centering
\caption{Off-the-shelf $7$B stress test (Dream-v0-Base-7B, Dream-Coder-v0-Base-7B), $64$ windows, mask rate $0.6$. The pair fails the specialization check, so this is a stress test, not evidence for the method.}
\label{tab:sevenb}
\small
\begin{adjustbox}{max width=\linewidth}
\begin{tabular}{@{}lcc@{}}
\toprule
Method & Acc. & $\Delta$ vs ours \\
\midrule
\textbf{Local evidence}   & $\mathbf{0.605}$ & $\cdot$ \\
Hard label router         & $0.606$ & $-0.000$ \\
DEMix router \citep{DEMix2022} & $0.605$ & $+0.001$ \\
Shuffled field            & $0.602$ & $+0.004^{*}$ \\
Equal weights             & $0.600$ & $+0.006^{**}$ \\
Field-avg.\ global        & $0.599$ & $+0.006^{***}$ \\
PackLLM-opt fusion \citep{Mavromatis2024} & $0.597$ & $+0.009^{***}$ \\
Best single expert        & $0.594$ & $+0.011^{***}$ \\
PackLLM-sim fusion \citep{Mavromatis2024} & $0.593$ & $+0.012^{***}$ \\
Best single (marginal) \citep{DiME2026} & $0.592$ & $+0.013^{***}$ \\
Dream (general)           & $0.581$ & $+0.024^{***}$ \\
\midrule
Field accuracy            & \multicolumn{2}{c}{$0.954$} \\
\bottomrule
\end{tabular}
\end{adjustbox}
\par\smallskip {\footnotesize\raggedright $^{*}p<0.05$, $^{**}p<0.01$, $^{***}p<10^{-3}$; unmarked differences are not significant.\par}
\end{table}

\subsection{A setting where the composition does not matter}
\label{app:replace}

We also evaluate the byte-level study under the replacement channel. At corruption rates from $0.1$ to $0.3$, every method, including a single expert, achieves accuracy within $0.015$ of $1-r$. The one-step decoder usually copies the observed token because it provides strong evidence about the clean token, so changing the composition weights rarely changes the mode. This setting therefore does not test the field. In contrast, the exact simulator in Section~\mpSecToy{} uses an exact posterior decoder under the same channel, allowing the prior composition to affect the output. Evaluating replacement corruption at byte scale requires a decoder that can overturn the observed token.

\subsection{The uniform-source mismatch}
\label{app:source}

The denoising-loss identity of Section~\mpSecMethod{} is stated for the mask-source path: the intermediate state is the clean sequence with a random subset of positions replaced by $\mathtt{[MASK]}$. The byte-level experts and the $7$B models are of this kind. The $1.3$B experts are not: they are fine-tuned from a base model whose source distribution is \emph{uniform} over the vocabulary, so at intermediate times a corrupted position holds a random token rather than a mask symbol.

We do not have an analogue of that identity for the uniform-source path. The per-position denoising score is therefore a heuristic energy rather than an order-averaged conditional log score, and the resulting source mismatch is uncontrolled. The $1.3$B and natural-document results are empirical evaluations outside the proposition's assumptions. Their high field accuracy shows that the surrogate remains informative in these experiments, but it does not validate the mask-source identity for a uniform source. Deriving a uniform-source analogue or fine-tuning the experts with a mask-source path would address this gap.

\subsection{Independent baseline implementations}
\label{app:baselines}

We distinguish independently optimized baselines from references derived from the local field.

\paragraph{Global weight.}
The independent global baseline solves the same approximate optimization under the constraint $\lambda_\ell=\lambda$ for all $\ell$. It initializes uniformly and draws its own prior and posterior particles under its current global field at every iteration. At byte scale it reaches document-weighted accuracy $0.618$, compared with $0.660$ for local evidence, a paired difference of $+0.042$ ($p<10^{-3}$, $d_z=1.18$). At $1.3$B it reaches $0.611$, compared with $0.653$ for local evidence ($+0.042$, $95\%$ CI $[+0.039,+0.046]$, $p<10^{-3}$, $d_z=1.47$, over the $256$ shared windows), and is not detectably different from the field-average reference at $0.610$ (difference $+0.0004$, $p=0.72$). On the arXiv documents it reaches document-weighted accuracy $0.682$ over $256$ windows from $71$ documents, against $0.691$ for local evidence ($+0.010$, $p<10^{-3}$, $d_z=0.56$), and is not detectably different from the jointly optimized weight ($0.682$; difference $-0.0006$, $95\%$ CI $[-0.0029,+0.0017]$, $p=0.61$). The independently selected best single expert reaches $0.669$ there, matching the reported selector. For the $7$B and README studies we report the jointly optimized reference in place of this independent baseline, as their tables indicate.

\paragraph{Best single expert.}
At byte scale, each vertex draws its own posterior particles and is scored with its own denoising energy; the selected expert reaches $0.613$. The $1.3$B independent selector uses the same own-posterior procedure and reaches $0.599$. These are posterior-score heuristics rather than marginal-evidence estimates because they omit the free-energy and posterior-entropy terms. For the $7$B and natural-document studies we use the posterior-score reference based on particles from the local field, as their tables indicate.

\paragraph{Adapted literature baselines.}
Three further baselines adapt published test-time fusion methods to the corrupted-window setting. All three score the corrupted window's \emph{visible} tokens under each expert with the same denoising surrogate used everywhere else in this paper; corrupted positions carry no target and are excluded. Every resulting weight field feeds the same decoder as the other methods.

\emph{PackLLM} \citep{Mavromatis2024} chooses one global weight per window from the input's own likelihood. The sim variant sets $\lambda_i\propto\exp(-\bar{\mathcal L}_i/T)$, where $\bar{\mathcal L}_i$ is expert $i$'s mean visible-token score and $T=0.1$ nats. The opt variant orders experts by individual fit, then mixes in each next expert through a line search over mixing coefficients that minimizes the composed model's visible-token cross-entropy; candidate mixtures are evaluated under the same weighted-logit composition the decoder uses, on the same sampled noisings, so the optimized objective matches the deployed operator. At the $7$B scale the greedy variant optimizes the log-probability mixture used by that decoder.

\emph{DEMix routing} \citep{DEMix2022} computes a parameter-free posterior over experts from the data itself. Our adaptation sets per-position weights proportional to $\exp(-\beta\,\mathcal L_{i,\ell})$ with $\beta=1$ on the visible-token scores; a corrupted position, which has no score of its own, inherits the mean score of the nearest visible positions within an expanding window. The field then receives the same smoothing as every other local field.

\emph{HMM routing} is a third local router, reported in Section~\mpSecRouters{} and in the arXiv table. It fits a two-state hidden Markov model over the per-position per-expert visible-token scores, with a symmetric transition prior, and takes the posterior state marginal as the field. It is the segmentation-aware member of the router family, and it trails local evidence at every scale we ran it ($0.568$ at $1.3$B, $0.644$ on arXiv, $0.684$ on README). It was not run at byte scale.

\emph{Marginal-evidence selection} picks the expert with the lowest mean visible-token score. Under the mask channel, the evidence of the corrupted window factorizes into a weight-independent channel term and the marginal likelihood of the visible tokens, so this selector estimates the single-expert marginal evidence, in the spirit of evidence-based model selection for score-based priors \citep{DiME2026}. It differs from the own-posterior selector above, which scores each expert on its own reconstructions; the two selectors are reported side by side.

\end{appendices}
\end{document}